\definecolor{linkblue}{RGB}{44,85,155}  %
\definecolor{customblue}{RGB}{31, 119, 180}  %
\definecolor{customorange}{RGB}{255, 127, 14}  %
\newtcolorbox{takeawaybox}[1][]{colback=orange!7, colframe=orange!30!black!20!, 
  boxrule=0.5pt, arc=4pt, left=4pt, right=4pt, top=4pt, bottom=4pt, fontupper=\small}
\newcommand{\bc}{\mathbf{c}}
\newcommand{\bu}{\mathbf{u}}
\newcommand{\bv}{\mathbf{v}}
\newcommand{\bw}{\mathbf{w}}
\newcommand{\bx}{\mathbf{x}}
\newcommand{\cC}{\mathcal{C}}
\newcommand{\cD}{\mathcal{D}}
\newcommand{\cE}{\mathcal{E}}
\definecolor{darkred}{rgb}{0.8,0,0}
\definecolor{darkgreen}{rgb}{0,0.5,0}
\definecolor{darkblue}{rgb}{0,0,0.7}
\definecolor{darkpurple}{rgb}{0.4,0,0.6}
\definecolor{lightgray}{rgb}{0.92,0.92,0.92}
\definecolor{lightpink}{rgb}{1.00,0.90,0.90}
\newcommand*\colvec[1]{
        \global\colveccount#1
        \begin{pmatrix}
        \colvecnext
}
\def\colvecnext#1{
        #1
        \global\advance\colveccount-1
        \ifnum\colveccount>0
                \\
                \expandafter\colvecnext
        \else
                \end{pmatrix}
        \fi
}
\setlist{itemsep=0pt, parsep=0pt, topsep=0pt}
\theoremstyle{plain}
\newtheorem{theorem}{Theorem}[section] 
\newtheorem{proposition}[theorem]{Proposition} 
\newtheorem{lemma}[theorem]{Lemma}
\theoremstyle{definition}
\newtheorem{definition}[theorem]{Definition}
\theoremstyle{remark}
\icmltitlerunning{Does Data Scaling Lead to Visual Compositional Generalization?}
\begin{document}

\twocolumn[
\icmltitle{Does Data Scaling Lead to Visual Compositional Generalization?}

\icmlsetsymbol{equal}{*}

\begin{icmlauthorlist}
\icmlauthor{Arnas Uselis}{tub}
\icmlauthor{Andrea Dittadi}{helmholtz,tum,mcml,mpi}
\icmlauthor{Seong Joon Oh}{tub}
\end{icmlauthorlist}

\icmlcorrespondingauthor{Arnas Uselis}{arnas.uselis@uni-tuebingen.de}
\icmlaffiliation{tub}{Tübingen AI Center, University of Tübingen}
\icmlaffiliation{helmholtz}{Helmholtz AI}
\icmlaffiliation{tum}{Technical University of Munich}
\icmlaffiliation{mcml}{Munich Center for Machine Learning (MCML)}
\icmlaffiliation{mpi}{Max Planck Institute for Intelligent Systems, Tübingen}

\icmlkeywords{} 

\vskip 0.3in
]

\begin{abstract} 
Compositional understanding is crucial for human intelligence, yet it remains unclear whether contemporary vision models exhibit it. The dominant machine learning paradigm is built on the premise that scaling data and model sizes will improve out-of-distribution performance, including compositional generalization. We test this premise through controlled experiments that systematically vary data scale, concept diversity, and combination coverage. We find that compositional generalization is driven by data diversity, not mere data scale. Increased combinatorial coverage forces models to discover a linearly factored representational structure, where concepts decompose into additive components. We prove this structure is key to efficiency, enabling perfect generalization from few observed combinations. Evaluating pretrained models (DINO, CLIP), we find above-random yet imperfect performance, suggesting partial presence of this structure. Our work motivates stronger emphasis on constructing diverse datasets for compositional generalization, and considering the importance of representational structure that enables efficient compositional learning.

\noindent %
\hspace*{-5mm}\faGithub \href{https://github.com/oshapio/visual-compositional-generalization}{\mbox{\footnotesize ~ github.com/oshapio/visual-compositional-generalization}}
\end{abstract}       
\printAffiliationsAndNotice{}  %

\vspace{-2em}
\section{Introduction}

\begin{figure}[h]
    \centering
    \vspace{1em}
    \includegraphics{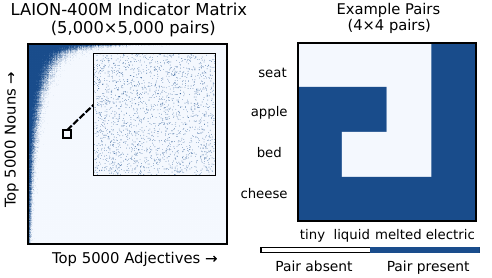} 
    \vspace{-1.5em}
    \caption{\small \textbf{Sparse concept combinations in large-scale datasets.} Left: An indicator matrix of noun-adjective co-occurrences in LAION-400M shows significant sparsity in concept combinations; the majority of cells are unobserved (zoomed-in view), demonstrating that even common concepts rarely combine in the dataset. This sparsity biases models toward memorizing frequent combinations rather than learning compositional structure. Right: A concrete example of a 4x4 matrix of nouns (seat, apple, bed, cheese) and attributes (tiny, liquid, melted, electric). This work investigates how vision models develop compositional attribute-object understanding in simplified and controlled settings.}
    \label{fig:motivation_laion400m}
\end{figure}
Compositional understanding is the ability to comprehend novel, complex scenarios by systematically combining simpler, known conceptual building blocks. It is widely regarded as a cornerstone of human intelligence. The Language of Thought hypothesis suggests that cognition arises from fundamental components and structured recombination rules \cite{fodorLanguageThought1975}, and neuroscience findings reinforce this perspective \cite{dehaeneSymbolsMentalPrograms2022}. This human proficiency sets a high bar for vision models that must understand how visual attributes and objects combine in novel ways. However, recent studies reveal significant limitations in the compositional abilities of state-of-the-art vision and vision-language models \cite{rahmanzadehgerviVisionLanguageModels2024, tongEyesWideShut2024, duCompositionalGenerativeModeling2024, yuksekgonulWhenWhyVisionlanguage2023, zengInvestigatingCompositionalChallenges}, raising fundamental questions about whether and when vision models can achieve this capability.

The dominant paradigm in machine learning relies on scaling data and model size to improve model capabilities, with the expectation that this approach will extend to compositional understanding. This paradigm, grounded in scaling laws \cite{kaplanScalingLawsNeural2020, hoffmann2022training, hestness2017deep} and demonstrated by the success of large language models \cite{brownLanguageModelsAre2020, touvronLlamaOpenFoundation2023} and large-scale vision models \cite{radfordLearningTransferableVisual2021, dosovitskiyImageWorth16x162021}, has driven the creation of massive datasets like LAION-400M \cite{schuhmannLAION400MOpenDataset2021}. However, as illustrated in Figure~\ref{fig:motivation_laion400m}, even LAION-400M exhibits critical sparsity in compositional coverage: many plausible attribute-object combinations are rarely or never observed (e.g. ``tiny seat'' or ``melted apple''). This sparsity reflects a combinatorial explosion: with visual attributes (color, shape, texture) that can combine in vast numbers of ways, most possible combinations will remain underrepresented regardless of dataset size. 

This motivates our central research question:

\begin{center}
\emph{``Do vision models generalize compositionally, and if so, under what conditions?''}
\end{center}

Our approach prioritizes controllability to understand when and how vision models can achieve compositional generalization. We first train models from scratch on carefully designed datasets to isolate the causal effects of data properties on compositional generalization. This allows us to observe both how generalization performance and representational structure emerge under different data conditions. We then validate whether large-scale pretrained vision models exhibit similar structure and examine how this relates to standard linear probing techniques. 

Through this controlled approach, we make five contributions:

\textbf{(1) Controlled experimental framework (\S\ref{sec:experimental-framework})}: We develop a framework (referred to as $(n,k)$-framework) to systematically study how data scaling impacts compositional generalization, varying key factors including training data scale, concept diversity, and combination exposure while focusing on single-object cases to isolate core compositional abilities.

\textbf{(2) Data diversity over scale (\S\ref{sec:difficulty})}: We demonstrate that compositional generalization depends critically on data diversity rather than scale: simply increasing in-distribution training data fails to improve generalization, while increasing diversity of data through more concept values and their combinations enhances performance.

\textbf{(3) Three-phase feature learning (\S\ref{sec:feature-analysis})}: We show that models exhibit three phases of feature learning: (i) spurious features with limited diversity, (ii) discriminative but non-linearly-factored features at moderate diversity, and (iii) linearly factored representations only under high diversity. 

\textbf{(4) Theoretical efficiency of linearly factored structure (\S\ref{sec:benefits_linear_factorization})}: We prove that when representations exhibit linearly factored structure, observing just two combinations per concept value is sufficient for perfect generalization to all unseen combinations.

\textbf{(5) Evaluation of pretrained large-scale models (\S\ref{sec:pretrained})}: We evaluate whether large-scale pretrained models (like DINO and CLIP) exhibit the linearly factored structure identified in our controlled experiments, finding they achieve above-random yet imperfect compositional performance.

Our experiments reveal a clear principle: compositional generalization is driven by data diversity, not mere data scale. Increased combinatorial coverage forces models to discover a linearly factored representational structure, where concepts decompose into additive components. We prove this structure is not just an artifact but a key to efficiency, enabling perfect generalization from just two examples per concept.

\section{Related Work}
  
\textbf{Compositionality, simplicity bias, and generalization.}
Compositional understanding---the ability to combine known building blocks into novel representations---is a cornerstone of human intelligence \cite{fodorLanguageThought1975, dehaeneSymbolsMentalPrograms2022}. A central question in machine learning is whether neural networks can achieve this systematic generalization. While formalisms for compositionality have been proposed through complexity-based theories \cite{elmoznino2024complexity}, structural analyses \cite{leporiBreakItEvidence2023}, and risk minimization frameworks \cite{mahajanCompositionalRiskMinimization2025}, models often exhibit a \textit{simplicity bias} \cite{valle-perezDeepLearningGeneralizes2018, renUnderstandingSimplicityBias2024}. They favor simple, spuriously correlated features over more complex, robust ones \cite{geirhosShortcutLearningDeep2020}, a challenge that causal and concept-based representation learning aims to address \cite{rajendranFromCausalConceptBased2024}. This bias is especially pronounced when some concept combinations are underrepresented, or come from a different domain \cite{jeong2025diffusion}. Our work provides a systematic, empirical investigation into the specific data conditions that compel models to overcome this bias and learn a generalizable, compositional latent structure.
 
\textbf{Role of data and scaling.}
The structure of training data is known to be critical for generalization \cite{madanWhenHowCNNs2021}. Prior work has shown that training on compositionally structured data improves performance \cite{stoneTeachingCompositionalityCNNs2017}, and that augmenting data with diverse primitive combinations is beneficial in NLP \cite{zhou2023data}. The broader trend of scaling has led to emergent abilities in large language models \cite{brownLanguageModelsAre2020, bubeckSparksArtificialGeneral2023}, including in-context skill composition \cite{heLearningGrokEmergence2024, aroraTheoryEmergenceComplex2023}. However, fundamental limitations remain \cite{dziriFaithFateLimits2023, zhaoCanModelsLearn2024, yuSkillMixFlexibleExpandable2023}, and performance is often tied to concept frequency in the pre-training data \cite{udandaraoNoZeroShotExponential2024, wiedemerPretrainingFrequencyPredicts}. We contribute to this debate by isolating \emph{combinatorial diversity} from raw data quantity, especially when models are trained from scratch, showing that the former is the critical driver for visual compositional generalization, whereas simply increasing the latter is insufficient.

\textbf{Structured and linearly factored representations.}
A growing body of work finds that large models often exhibit structured representations. Specifically, in large vision-language models, concept embeddings have been observed to sometimes exhibit (to a certain extent) \textit{linearity in representation space}, where a composite concept's representation is the vector sum of its constituents \cite{tragerLinearSpacesMeanings2023, steinCompositionalityConceptLearning2024a, parkGeometryCategoricalHierarchical2024, andreasMeasuringCompositionalityRepresentation2019}. Theoretical work provides formal conditions under which modularity and abstract representations emerge naturally, for instance as a function of input statistics \cite{dorrell2024don, whittington2022disentanglement} or when networks are trained to perform multiple tasks \cite{johnstonAbstractRepresentationsEmerge}. However, merely learning structured or disentangled representations does not automatically guarantee compositional generalization, and the precise conditions under which a compositional structure yields such generalization remain an active area of theoretical inquiry \cite{lipplWhenDoesCompositional2025, monteroLostLatentSpace2022, monteroRoleDisentanglementGeneralisation2020, dittadiTRANSFERDISENTANGLEDREPRESENTA2021}, particularly for visual attributes \cite{zhuArtVLMAttributeRecognition2024}. Our work provides further investigation under compositional generalization viewpoint, demonstrating the three-phase \emph{emergence} of this linear structure as a function of data diversity and proving its efficiency for generalization.

\textbf{Model-centric approaches and evaluation frameworks.}
Many works aim to improve compositionality through model-centric solutions, such as specialized architectures \cite{zahranAnticipatingFutureObject2024, mittalModularArchitectureEnough2022}, object-centric models \cite{locatelloObjectCentricLearningSlot2020, wiedemerProvableCompositionalGeneralization2023}, soft prompting \cite{nayakLearningComposeSoft2023}, or feature alignment \cite{wangEnhancingCompositionalGeneralization2024}, or algorithmic changes \cite{renImprovingCompositionalGeneralization, renCompositionalLanguagesEmerge2020}. These methods are often studied in zero-shot settings \cite{atzmonCausalViewCompositional2020, xianZeroShotLearningComprehensive2020, isolaDiscoveringStatesTransformations2015, wangLearningConditionalAttributes2023}. Concurrently, vision-language models face their own compositional challenges, with debates on whether the bottleneck lies in the vision or text encoder \cite{duCompositionalGenerativeModeling2024, yuksekgonulWhenWhyVisionlanguage2023, kamathTextEncodersBottleneck2023, vaniSPAROSelectiveAttention2024}. In contrast to these model-focused approaches, our work investigates whether compositionality can emerge naturally in standard architectures, isolating the data's structure as the primary variable. This requires careful evaluation, and while benchmarks exist for complex reasoning \cite{zerroug2022benchmark} or specific setups \cite{madanWhenHowCNNs2021, schottVisualRepresentationLearning2022, mamaghanExploringEffectivenessObjectCentric2024}, our $(n,k)$ framework is designed as a controlled tool. It allows us to make precise, causal claims about the data factors that drive generalization.%

\section{Approach and experimental framework} \label{sec:experimental-framework}

In this section, we establish a systematic framework for studying compositional generalization in visual discriminative tasks. We begin by formalizing the compositional generalization through a structured mathematical framework that characterizes how visual concepts combine. We then introduce our $(n,k)$ experimental framework that allows us to systematically control the complexity of concept spaces and evaluate models' ability to generalize to novel concept combinations. Finally, we describe our experimental design, covering both training models from scratch and evaluating pre-trained foundation models.

Our approach is motivated by the question of whether scaling data can enable compositional generalization in vision models. To understand the mechanisms behind learning, we also examine whether models develop structured representations in a form of \textit{linear factorization}, as such structure has been observed to an extent in large pretrained vision models \citep{steinCompositionalityConceptLearning2024a, tragerLinearSpacesMeanings2023}. 

\textbf{Data and concept space.} We start by formalizing how we represent visual data in terms of concepts. Formally, we consider a finite set $\mathcal{C} = \mathcal{C}_1 \times \dots \times \mathcal{C}_c$ of $c$ concepts representing a factored set of concepts, where each $\mathcal{C}_i$ contains possible concept values for a particular concept (like shape or color). Each image $\mathbf{x} \in \mathcal{X}$ is characterized by its position in this concept space through a mapping that assigns it a value from each $\mathcal{C}_i$. For example, an image of a red square could be represented as a point $c = (c_1, c_2, \ldots, c_c) \in \mathcal{C}$ where $c_1 \in \mathcal{C}_1 = \{\text{red, blue, green}\}$ represents color and $c_2 \in \mathcal{C}_2 = \{\text{square, circle, triangle}\}$ represents shape, and other concepts representing other attributes.
\begin{definition}[Concepts and Concept Space]
A \textbf{concept space} $\mathcal{C} = \mathcal{C}_1 \times \dots \times \mathcal{C}_c$ is a Cartesian product of $c$ sets, where each set $\mathcal{C}_i$ is called a \textbf{concept} and contains all possible values for concept $i$. Each element $c_i \in \mathcal{C}_i$ is called a \textbf{concept value}, and each element $c \in \mathcal{C}$ represents a unique combination of concept values $(c_1,\dots,c_c)$ where $c_i \in \mathcal{C}_i$.
\end{definition}
Although real-world images typically contain many concepts (e.g. color, shape, size, texture), we simplify our study by focusing on pairs of concepts---for example, how models combine colors with shapes in new ways. Even with this simplified setup, we find that models struggle significantly (see Section \ref{sec:from_scratch}), suggesting that handling more concepts simultaneously would be even more difficult.

\textbf{The $(n,k)$ framework.} To systematically study compositional generalization, we need a way to control the complexity of concept spaces and the diversity of training data. We introduce the $(n,k)$ framework that characterizes concept combination spaces through two key parameters:
\begin{align*}
n &: \text{number of distinct values each concept can take} \\
k &: \text{number of training examples per concept value}
\end{align*}

Given two concepts with $n$ values each, there are $n^2$ possible combinations forming an $n \times n$ grid. We observe only $k$ combinations for each concept value during training, testing generalization on the remaining unseen combinations. This framework allows systematic control over both concept complexity ($n$) and training data diversity ($k$). 
\begin{wrapfigure}[8]{r}{0.20\textwidth}
    \centering
    \vspace{1em}
    \hspace*{-1.4em}
    \import{figures/}{draw_concept_n_k_quickly.pdf_tex}

    \label{fig:training_test_sets}
    \vspace{-0.5em}
\end{wrapfigure}

The figure on the right illustrates training combinations for $n=4$ concepts with $k=3$ and $k=2$ combinations per concept value. Blue cells indicate the set of training combinations, which we denote $\mathcal{S}_{\text{train}}$, while orange cells represent the unseen test combinations, denoted $\mathcal{S}_{\text{test}}$. Each concept value appears in exactly $k$ training combinations. 

\textbf{Total dataset size.} For each of the $n \times k$ observed training combinations (the blue cells), we generate multiple images to ensure models learn robustly. Specifically, each image varies along several additional \textit{unlabeled} concept dimensions, $\mathcal{C}_{\text{vary}}$ (like position, orientation, or background). We sample $n_{\text{cell}}$ examples for each labeled combination, sampling uniformly across all possible unlabeled variations. For instance, in a setup with $n=4$ and $k=2$, there are $n \times k = 8$ distinct labeled training combinations. If we introduce two unlabeled concepts, such as 8 possible positions and 12 possible rotations, the total number of unique images in the training set becomes $8 \times 8 \times 12 = 768$. Concrete examples of the concept space for different values of $n$ and $k$ are shown in Figure \ref{fig:funny_sprites_examples} in Appendix.

\begin{figure*}[t]
    \centering
    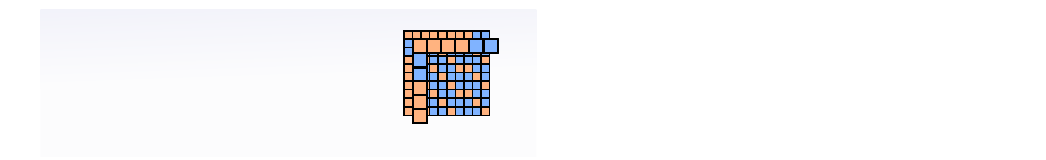
    \caption{\small \textbf{Investigating compositional learning through concept scaling.} The figure illustrates our two main experimental settings. \textbf{Left (Data setting):} Training data consisting of images with corresponding concept combinations shown in the grid, where blue cells indicate observed combinations during training. \textbf{Right (Model setting):} Two approaches—training models from scratch (Section~\ref{sec:from_scratch}) where we systematically increase the number of possible concept values $n$ while fixing combinations per concept at $k=2$, showing examples with $n=4$, $n=6$, and $n=10$, and evaluating pre-trained foundation models' (FM) compositional abilities by fitting an MLP classifier on features (Section~\ref{sec:pretrained}). The grid demonstrates how the concept space expands as we increase $n$, creating a larger set of unseen combinations for testing generalization.}
    \label{fig:sampling_strategy}
\end{figure*}

\textbf{Compositional generalization.} Having established our concept space framework, we now formalize the specific learning problem we study. Let $\mathcal{X}$ be the space of images and $\{\mathcal{C}_i\}_{i=1}^c$ be the set of possible values for all $c$ concepts. While images vary along all concept dimensions, we focus on learning and evaluating compositional relationships between two consistently labeled concepts. Specifically, each image $\mathbf{x} \in \mathcal{X}$ in our training data is explicitly labeled with a pair of concept values $(c_1, c_2) \in \mathcal{C}_1 \times \mathcal{C}_2$, while all other factors of variation (like position, orientation, or background) remain as unlabeled concepts. 

The compositional generalization problem over two concepts can be formalized as follows:

    (1) \textbf{Training}: Given a dataset $\mathcal{D}_{\text{train}} = \{(\mathbf{x}_i, c_1^i, c_2^i)\}_{i=1}^{N_\text{train}}$ of $N_\text{train}$ total images, where each image $\mathbf{x}_i$ is explicitly labeled with its concept values (e.g., $c_1$ for color, $c_2$ for shape). The training combinations $(c_1^i, c_2^i)$ are drawn from the restricted subset $\mathcal{S}_{\text{train}} \subset \mathcal{C}_1 \times \mathcal{C}_2$. We refer to this as \textit{in-distribution (ID)} data.

    (2) \textbf{Testing}: Evaluate on combinations from $\mathcal{S}_{\text{test}} = (\mathcal{C}_1 \times \mathcal{C}_2) \setminus \mathcal{S}_{\text{train}}$, i.e., concept pairs that never co-occurred during training. We refer to this as \textit{out-of-distribution (OOD)} data.

    (3) \textbf{Goal}: Learn a model $f$ that accurately predicts both labeled concepts $(f_1(\mathbf{x}), f_2(\mathbf{x}))$ even for images containing unseen combinations.

\textbf{Experimental design.} Our experimental approach consists of two main settings as illustrated in Figure~\ref{fig:sampling_strategy}: (1) training models from scratch, and (2) evaluating pretrained foundation models on compositional tasks under our framework. In both cases we systematically vary the $(n,k)$ parameters.

\textbf{Representation structure and linearity.} A key question for understanding compositional generalization is how concepts are represented and combined in the learned feature space. We investigate whether concepts combine linearly in the representation space, which would provide a concrete mechanism for efficient compositional generalization (as we show in Section \ref{sec:benefits_linear_factorization}).  
\begin{definition}[Linearly factored embeddings \citep{tragerLinearSpacesMeanings2023}]
\label{def:linearly_factored_embedding}
Given a concept space $\mathcal{C} = \mathcal{C}_1 \times \dots \times \mathcal{C}_c$, a collection of vectors $\{\bu_{c_1},\dots,\bu_{c_c}\}_{c_1, \dots, c_c \in \mathcal{C}}$ is linearly factored if there exist vectors $\mathbf{u}_{c_i} \in \mathbb{R}^d$ for all $c_i \in \mathcal{C}_i$ ($i = 1, \dots, c$), which we refer to as concept representations, such that for all $\bc = (c_1, \dots, c_c)$:
\begin{equation}
\mathbf{u}_c = \mathbf{u}_{c_1} + \dots + \mathbf{u}_{c_c}.
\end{equation}
\end{definition}
\vspace{-0.7em}

While neural networks are not guaranteed to learn such linearly factored representations, in practice we often observe that these structures emerge during training, as we will demonstrate in the following sections. When such linear factorizations do emerge, they offer benefits in generalizing compositionally, as we will show in Section \ref{sec:benefits_linear_factorization}.

\textbf{Experimental setup.} The guiding principle for our work was to grant models maximally favorable conditions for demonstrating compositional abilities. We do this through several deliberate choices: using oracle model selection rather than validation, fitting multiple classification heads simultaneously to encourage feature reuse, and partitioning concept combinations to create clear train (ID) / test (OOD) evaluation splits. 

\textbf{Model selection and metrics.} For model selection, we use the average accuracy across all concepts at each epoch. We perform \textit{oracle} model selection by directly evaluating models on the test set to select the best performing checkpoint \citep{gulrajaniSearchLostDomain2020}. This allows us to focus on the fundamental capabilities of models rather than validation strategies.

\textit{(1) Training from scratch:} We use \textsc{ResNet-50} \citep{heDeepResidualLearning2015} with linear classification heads; we found that using a transformer backbone (ViT) did not improve generalization performance (see Appendix \ref{app:vit_comparison}). The model outputs two predictions $f(\mathbf{x}) = (f_1(\mathbf{x}), f_2(\mathbf{x}))$ where $f_j: \mathcal{X} \to \mathcal{C}_j$ predicts the value of concept $j$ using a shared backbone followed by separate linear heads. Unlike CLIP which uses language embeddings for classification, we learn fixed classification heads directly from visual data to provide an optimistic setting for compositional learning through feature reuse.

\textit{(2) Pre-trained models:} We evaluate \textsc{ResNet50-ImageNet1K} \citep{heDeepResidualLearning2015}, \textsc{ResNet50-DINOv1} \citep{caronEmergingPropertiesSelfSupervised2021}, \textsc{DINOv2-ViT-L/14} \citep{oquabDINOv2LearningRobust2024}, and \textsc{CLIP-ViT-L/14} \citep{radfordLearningTransferableVisual2021}. For these models, we pick the best probe architectures on the frozen pre-trained features: a direct linear probe (no hidden layers), an MLP with one hidden layer of size \colorbox{lightgray}{\texttt{512}}, or an MLP with two hidden layers of size \colorbox{lightgray}{\texttt{[512, 512]}} with ReLU activations; we found these to provide the best performance, and more complex architectures lead to diminishing returns (results in Appendix \ref{app:probe-eval}).

\textbf{Datasets.} We use \textsc{DSprites} \citep{dsprites17} (using only heart shape to avoid symmetries), \textsc{3DShapes} \citep{kimDisentanglingFactorising2019}, \textsc{PUG} \citep{bordesPUGPhotorealisticSemantically2023}, \textsc{Colored-MNIST} \citep{arjovskyInvariantRiskMinimization2020}, and a dataset we introduce of perceptually-challenging shapes without symmetries to which we refer as \textsc{FSprites}. Details in Appendix \ref{app:datasets}.

\noindent \textbf{Metrics.} To evaluate compositional generalization and analyze the learned representations, we use two sets of metrics.

For \textit{generalization}, we report the zero-shot accuracy on $\mathcal{S}_{\text{test}}$, measuring the model's ability to classify unseen concept combinations. We report the average accuracy for the concept pair under consideration.

For representation structure, we consider: 

\textit{(i) Decodability}---following \citet{kirichenkoLastLayerReTraining2023, uselisintermediate}, we train linear probes on balanced data and report average accuracy across concepts, indicating if features capture concept information; that is, we merge the training and testing sets, and use a held-out dataset covering all concept combinations for measuring decoded accuracy. 

\textit{(ii) Linearity}---we compute the coefficient of determination ($R^2$) between joint representations $\mathbf{f}(\mathbf{x})$ and their reconstruction from individual concept representations $\sum_{i=1}^k \mathbf{u}_{c_i}$, where $R^2 = 1 - \frac{\sum_{\mathbf{x}} \|\mathbf{f}(\mathbf{x}) - \sum_{i=1}^k \mathbf{u}_{c_i}\|^2}{\sum_{\mathbf{x}} \|\mathbf{f}(\mathbf{x}) - \bar{\mathbf{f}}\|^2}$ with $\bar{\mathbf{f}} = \frac{1}{|\mathcal{D}|}\sum_{\mathbf{x} \in \mathcal{D}} \mathbf{f}(\mathbf{x})$ measures how well representations follow linear structure. Here, $\bar{\mathbf{f}}$ represents the mean representation across all samples. 

\textit{(iii) Orthogonality}---we measure the mean cosine similarity $\frac{1}{|\mathcal{C}_1||\mathcal{C}_2|} \sum_{i,j} \cos(\mathbf{u}_{c_1^i}, \mathbf{u}_{c_2^j})$ between concept representations to assess if concepts are encoded in orthogonal subspaces, sometimes found in pretrained models \citep{steinCompositionalityConceptLearning2024a, wangConceptAlgebraScoreBased2024}. 

We report the representation structure metrics only for from-scratch models; this is due to the fact that pretrained models may encode other information other than the target concepts. 
 
\section{Does compositional generalization emerge with data scale?}\label{sec:from_scratch}

\begin{figure*}[h]
    \centering
    \includegraphics{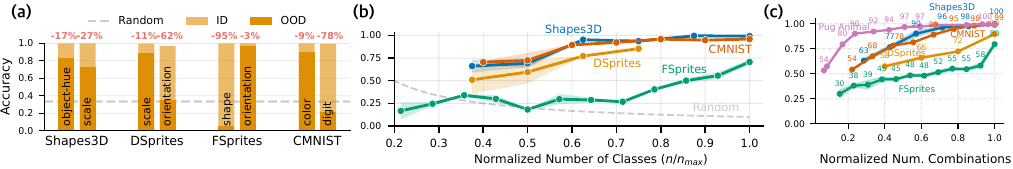} 
    \caption{\small \textbf{Compositional generalization emerges through different forms of concept diversity.} (a) In basic settings with limited diversity, models show substantial accuracy drops on unseen combinations (brown) compared to seen combinations (yellow), demonstrating the inherent difficulty of compositional generalization. (b) When increasing the number of target classes ($n$) while keeping dataset size and diagonal training combinations fixed ($k=n-1$), models show improved generalization, suggesting that target space diversity drives compositional learning. (c) With fixed maximum target classes, increasing the number of training combinations ($k$) also improves performance, showing that exposure to more concept combinations enhances generalization ability, even if the target size is the same.}
    \label{fig:diversity_hard_ood} 
    \vspace{-0.5em}  
\end{figure*}

Building on our formal framework, we systematically investigate how neural networks learn compositional understanding as we vary both data quantity and concept diversity. Our $(n,k)$ framework allows us to precisely control which concept combinations models see during training, enabling us to isolate how different factors affect compositional generalization. Through controlled experiments, we investigate several key questions:
\begin{enumerate}
    \item Can models generalize compositionally under basic settings? We find that compositional generalization remains challenging with accuracy drops of 27-95\% on unseen combinations.
    \item Does increasing ID data quantity improve compositional generalization? We show that simply scaling ID data quantity is insufficient.
    \item Can neural networks achieve compositional generalization under any conditions? Yes, but only with sufficiently diverse training data.
    \item What kind of structure do representations exhibit when models generalize well? We find that models that generalize well exhibit a highly linear and orthogonal structure in their feature space. 
    \item What are the theoretical benefits of such structure for compositional generalization? We show that this linear structure enables perfect generalization to unseen combinations with just two combinations per concept value.
\end{enumerate}

\subsection{Compositional generalization is difficult but diverse data helps}\label{sec:difficulty}

\textbf{Models struggle with basic compositional generalization.} In Figure \ref{fig:diversity_hard_ood}(a), in a basic compositional setting with $n=3$ concept values and $k=2$ seen combinations per concept value, while all models achieve strong ID accuracy (near 100\%, yellow bars), their performance drops significantly when evaluated on unseen combinations of concepts (brown bars). For example, MNIST digit recognition accuracy drops by around 78\% in the OOD setting. Interestingly, in all datasets, at least one concept shows relatively small degradation, ranging from only 3\% drop (orientation in FSprites) to 17\% drop (object-hue in Shapes3D), while other concepts in the same datasets show much larger performance gaps.

\textbf{Increasing concept diversity improves generalization.} Figure \ref{fig:diversity_hard_ood}(b,c) shows that generalization improves both when increasing the number of target classes ($n$) with fixed diagonal training combinations ($k=n-1$), and when increasing training combinations ($k$) with fixed maximum target classes. This suggests that both target space diversity and exposure to more concept combinations enhance compositional learning, even when the target size remains constant.

\begin{figure}[h]
    \includegraphics{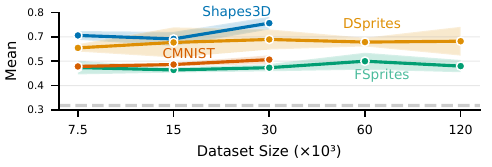}
    \vspace{-2.5em}
    \caption{\small \textbf{Increasing ID training data quantity does not solve compositional generalization.} Despite training with significantly more in-distribution samples, models still struggle to generalize to unseen concept combinations. The gap between ID and OOD performance remains large across all datasets, suggesting that the challenge of compositional generalization cannot be solved simply by scaling up training data within the same distribution.}
    \label{fig:data_scale_increase}
    \vspace{-0.5em}    
\end{figure} 
\textbf{Dataset size alone provides limited improvement for generalization.} We experimented with \textsc{ResNet50} trained from scratch using $n=3, k=1$ and three different training set sizes: 7,500, 15,000, and 30,000 samples for \textsc{Shapes3D} and \textsc{CMNIST} (the maximum number of unique samples possible with these combinations), and up to 120,000 samples for \textsc{DSprites} and \textsc{FSprites}. We excluded \textsc{PUG} from this analysis since with $n=3$, there were too few unique samples available to effectively train the model from scratch.

As shown in Figure \ref{fig:data_scale_increase}, despite increasing the training data by 4x, the gap between ID and OOD performance remains large across all datasets: models still show accuracy drops of 60-80\% on unseen combinations. This suggests that simply scaling up training data within the same distribution is insufficient for achieving compositional generalization.

\begin{takeawaybox}
\textbf{Takeaway \S \ref{sec:difficulty}:} Compositional generalization remains challenging across all datasets, with accuracy drops of 60-80\% on unseen combinations despite perfect in-distribution performance. While increasing target diversity and combination exposure improves generalization, scaling dataset size provides limited improvement. Some concepts show relatively small degradation (3-17\% drops) while others in the same datasets show much larger gaps. Both target space diversity and exposure to more concept combinations enhance compositional learning, but increasing training data quantity (up to 4x) only helps reduce the large ID-OOD performance gap without fully solving the problem.
\end{takeawaybox}

\begin{figure*}[h]
    \centering     
    \includegraphics[width=1.0\textwidth]{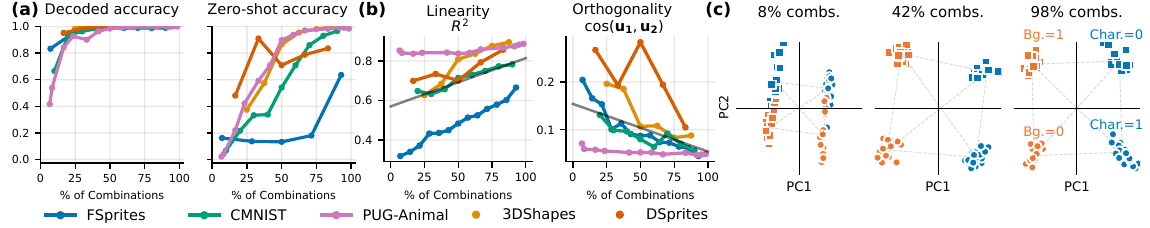}  
    \vspace{-1.5em}
    \caption{\small \textbf{Linearity emerges with data diversity, while feature discriminability alone does not imply linear structure.} (a) Feature discriminability emerges early but does not imply compositional structure, (b) Linear concept representations only emerge with increased training diversity, as shown through $R^2$ scores and orthogonality measures, (c) PCA visualizations confirm evolution from entangled to linear feature organization as training diversity increases. X-axis represents percentage of training combinations $k / n$, with $n$ being the maximum number of concept values.}
    \label{fig:three_phases} 
    \vspace{-0.5em}   
\end{figure*}

\subsection{Three-phase behavior in feature learning}\label{sec:feature-analysis}

To understand why models struggle with compositional generalization, we investigate two potential explanations motivated by prior work on shortcut learning and distributional robustness \citep{geirhosShortcutLearningDeep2020, sagawaDistributionallyRobustNeural2020}. First, the learned features could be spurious, failing to capture meaningful concept information. Second, novel concept combinations may produce ``misplaced'' representations that the classifier fails on. We analyze these possibilities by examining the structure of feature spaces using the linearity and orthogonality metrics defined in Section \ref{sec:experimental-framework}, measuring both the quality of individual concept representations and how predictably they combine. Using a balanced dataset with all concept combinations (including unseen ones) and 100 samples per combination, we evaluate models trained in the previous section across multiple datasets (\textsc{MNIST}, \textsc{FSprites}, \textsc{Shapes3D}, \textsc{PUG}). 

Our analysis reveals two key findings about how neural networks learn to represent concepts (Figure \ref{fig:three_phases}). First, we find that \textit{linearity in representations} emerges naturally as models are exposed to more diverse training combinations. As shown in Figure \ref{fig:three_phases}(b), both the linear separability ($R^2$ scores) and orthogonality (cosine similarity) of concept dimensions improve with increased training diversity. This emergence of linear structure is accompanied by improved zero-shot generalization---Figure \ref{fig:three_phases}(a) shows that zero-shot accuracy on unseen combinations steadily increases as training diversity grows.

Second, we observe that this progression occurs in three distinct phases: 
(i) With limited concept 
combinations (0-10\%), models learn spurious features with poor discrimination (decoded accuracy $<$80\%) 
and random-level zero-shot performance, as shown by entangled representations in Figure 
\ref{fig:three_phases}(c) at 8\%.

(ii) At moderate diversity (25-75\%), linearity and orthogonality begin 
emerging (Figure \ref{fig:three_phases}(b)), with features becoming decodable (100\%  accuracy) and 
zero-shot performance reaching 60-80\%. 

(iii) At high diversity (75-100\%), while discriminability 
plateaus, representations become strongly linear ($R^2>0.8$) and orthogonal (cosine similarity $<$0.1), 
enabling zero-shot accuracy above 90\% on the majority of the datasets. The PCA visualizations in Figure 
\ref{fig:three_phases}(c) qualitatively confirm this progression from entangled to linear factorization.

These results indicate a link between training diversity and representation structure in NNs. While models can learn to discriminate individual concepts with limited data (at around 25\%), linearity in representations emerges only with extensive concept diversity. Empirically, linearity and zero-shot accuracy appear to be directly related, suggesting an explanation of previous work showing that decodable features can be re-aligned to support generalization in large systems like CLIP \citep{koishigarina2025clip}.

\begin{takeawaybox}
\textbf{Takeaway \S\ref{sec:feature-analysis}:} Neural networks exhibit three phases: (1) With limited diversity ($<$10\% combinations), models learn spurious features and fail at basic concept discrimination; (2) At moderate diversity (10-75\% combinations), models gain discriminative ability but lack linear structure; (3) Only with high diversity ($>$75\% of combinations) does true compositional structure emerge, with highly linear ($R^2 > 0.8$) and orthogonal (cosine similarity $<$ 0.2) concept dimensions. This progression shows that concept diversity is necessary for models to learn structured and generalizable representations. 
\end{takeawaybox}

\begin{figure*}[h!]
    \centering   
    \includegraphics[width=1.0\textwidth]{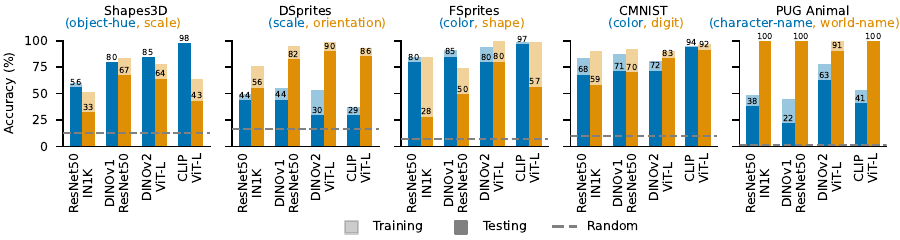}  
    \vspace{-2em}
    \caption{\small \textbf{Compositional generalization capabilities of pre-trained models under assumed linear factorization.} Bar plots show both training (transparent) and testing (solid) accuracy across different datasets (\textsc{dSprites}, \textsc{Shapes3D}, \textsc{CMNIST}, \textsc{PUG-Animal}) when using minimal training data ($k=2$ combinations per concept) to learn linear concept representations for each concept. Dashed lines indicate random baseline performance. Following Proposition~\ref{thm:minimal_compositional}, we identified the factored representations $\mathbf{u}_{c_1}$ and $\mathbf{u}_{c_2}$ for each concept value using $k=2$ combinations per concept value. While perfect generalization predicted by the proposition would require ideal linear compositionality, our empirical results show strong performance on certain concepts (e.g., $>$ 90\% accuracy on color, orientation, digit, and background concepts for either \textsc{CLIP} or \textsc{DINOv2} models), with varying effectiveness across different concept types and models, suggesting that pre-trained representations exhibit partial linearity in their representations.}
    \label{fig:assume_linear_structure}   
    \vspace{-1em} 
\end{figure*} 

\subsection{Benefits of linear factorization} \label{sec:benefits_linear_factorization}

\begin{figure}[h]
    \centering             
    \hspace{-0.5em}
    \includegraphics{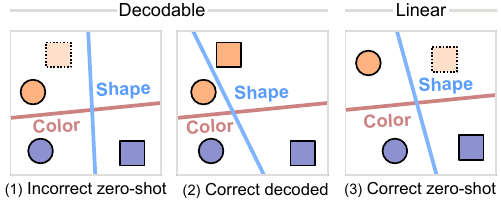}    
    \vspace{-1.7em} 
    \caption{\small \textbf{Importance of linear feature structure for compositional generalization.} We illustrate a schematic for shape and color classification using linear models in a 2-dimensional feature space, comparing zero-shot and adapted cases with frozen feature extractor. (1) If the feature space lacks a \textbf{linear structure}, the model misclassifies the \texttt{orange square} in zero-shot inference. (2) Adaptation by adding \texttt{orange square} samples allows correct classification. (3) A \textbf{linearly structured} feature space enables correct zero-shot generalization without adaptation.  The decision boundaries are linear in all cases, but only the features in the rightmost panel enable zero-shot generalization.} 
    \label{fig:benefits_simple_structure}     
    \vspace{-0.5em}     
  \end{figure}

The benefit of a linear feature structure becomes apparent when contrasted with the weaker property of decodability. While features are often \textit{decodable}, this alone is insufficient for generalization to unseen combinations. Generalizing through decodability may require exposure to all possible concept pairings, which is infeasible. As illustrated in Figure~\ref{fig:benefits_simple_structure} (center), while adaptation can compensate for unstructured representations, this approach demands a balanced dataset of all combinations, which is impractical at scale. In contrast, a \textit{linear} feature structure enables generalization without exhaustive supervision. As shown in Figure~\ref{fig:benefits_simple_structure} (right), when representations are organized linearly, models can correctly classify novel combinations, overcoming the limitations of mere decodability.

Motivated by our observation that models achieving strong compositional generalization exhibit highly linear concept representations, we now investigate the theoretical benefits of such a structure. In this idealized case, how many concept combinations would a model with perfectly linear  representations need to observe to generalize to all unseen combinations? We answer this questions in the following proposition.

\begin{proposition}[Minimal Compositional Learning]  
    \label{thm:minimal_compositional}
    Let $f: \mathcal{X} \to \mathbb{R}^d$ be a feature extractor with linearly factored concept embeddings over $\cC$. Let $\{\mathbf{u}_{c_1^1}, \ldots, \mathbf{u}_{c_1^n}\}$ and $\{\mathbf{u}_{c_2^1}, \ldots, \mathbf{u}_{c_2^n}\}$ be the concept vectors for the first and second concepts respectively, where their joint span has dimension $2n-1$. Suppose we only observe joint representations for concept combinations $c_i, c_j \in \{1,\ldots,n\}$. Then $k=2$ combinations per concept value suffice to learn a linear classifier that perfectly generalizes to all $(n-k) \cdot n$ unseen combinations.
  \end{proposition}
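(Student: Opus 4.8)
The plan is to split the argument into a \emph{reconstruction} step and a \emph{classifier-construction} step. Write $\ba_i := \bu_{c_1^i}$ and $\bb_j := \bu_{c_2^j}$; by linear factoredness the feature of an image with labels $(i,j)$ is $\ba_i+\bb_j$ (up to a common offset coming from the unlabeled concepts, which can be folded into, say, the $\ba$'s). The observations are the $2n$ vectors $\br_{ij}=\ba_i+\bb_j$ for $(i,j)\in\mathcal{S}_{\mathrm{train}}$. First I would form the bipartite incidence graph $G$ on $\{1_L,\dots,n_L\}\sqcup\{1_R,\dots,n_R\}$ with one edge per observed combination; with $k=2$ and the framework's training pattern (a diagonal together with a shifted diagonal) every vertex has degree $2$ and $G$ is a single $2n$-cycle, hence connected. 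Fixing $\bb_1:=\mathbf 0$ as a gauge choice and propagating the relations $\ba_i=\br_{ij}-\bb_j$, $\bb_j=\br_{ij}-\ba_i$ along a spanning tree of $G$ then determines all $\ba_i,\bb_j$; the one remaining (cycle-closing) edge imposes a consistency condition that is automatically satisfied because the true vectors solve every equation. This yields $\hat\ba_i=\ba_i+\bv$ and $\hat\bb_j=\bb_j-\bv$ for one fixed unknown $\bv$, and the quantities $\tilde\ba_i:=\hat\ba_i-\hat\ba_1=\ba_i-\ba_1$ and $\bp:=\hat\ba_1+\hat\bb_1=\ba_1+\bb_1$ that I will use downstream are invariant to $\bv$.

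For the classifier, set $V_1:=\mathrm{span}\{\tilde\ba_i: i\ge 2\}$ and $V_2:=\mathrm{span}\{\bb_j-\bb_1: j\ge 2\}$. The hypothesis that the joint span $\mathrm{span}\{\ba_i+\bb_j\}_{i,j}=\mathbb{R}\bp+V_1+V_2$ has dimension $2n-1$ forces $\dim V_1=\dim V_2=n-1$ and, in particular, $V_1\cap V_2=\{\mathbf 0\}$. Pick any complement $U$ of $V_1\oplus V_2$ in $\mathbb{R}^d$ and let $P_1:\mathbb{R}^d\to V_1$ be the projection with kernel $V_2\oplus U$. Then for every combination $(i,j)$ (seen or not) the true feature $\ba_i+\bb_j$ satisfies $P_1\bigl((\ba_i+\bb_j)-\bp\bigr)=P_1\bigl(\tilde\ba_i+(\bb_j-\bb_1)\bigr)=\tilde\ba_i$, which depends only on $i$; since $\dim V_1=n-1$, the $n$ vectors $\tilde\ba_1=\mathbf 0,\tilde\ba_2,\dots,\tilde\ba_n$ are pairwise distinct, so $i$ is recovered exactly. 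I would then package this as an honest linear head via scores $g_1(\br)_i:=\langle\tilde\ba_i,\,P_1(\br-\bp)\rangle-\tfrac12\|\tilde\ba_i\|^2$, which is affine in $\br$; on a true feature $g_1$ is strictly maximized at the correct $i$ because $\langle\tilde\ba_{i'},\tilde\ba_i\rangle-\tfrac12\|\tilde\ba_{i'}\|^2<\tfrac12\|\tilde\ba_i\|^2$ whenever $\tilde\ba_{i'}\neq\tilde\ba_i$ (Cauchy--Schwarz, strict by distinctness). The symmetric construction, swapping the roles of the two concepts, gives $g_2$. Since these argmaxes are correct for all of $\mathcal C_1\times\mathcal C_2$, they are in particular correct on the $(n-2)n$ unseen combinations.

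Two bookkeeping points need care. The first is the gauge freedom $(\ba,\bb)\mapsto(\ba+\bv,\bb-\bv)$, which is genuinely unresolvable from the data, so every object fed into the classifier must be invariant to it---this is why I only ever use $\ba_i-\ba_1$ and $\ba_1+\bb_1$. The second is the connectivity of $G$: an arbitrary degree-$2$ bipartite graph is a disjoint union of cycles, and if it were disconnected the relative offset between components would be unidentifiable and no fixed linear head could classify the cross-component combinations; so this step relies on choosing $\mathcal{S}_{\mathrm{train}}$ as the framework does, which is exactly what makes the constant $k=2$ (rather than something larger) sufficient. I expect the genuinely load-bearing step---and the one to write most carefully---to be the verification that a single fixed projection $P_1$ reads off concept $1$ from every combination simultaneously: this is precisely where the dimension hypothesis $2n-1$ enters, since it is equivalent to $V_1\cap V_2=\{\mathbf 0\}$, and without it some $\tilde\ba_i$ could coincide with a difference $\bb_j-\bb_{j'}$, making the two concepts confusable on the unseen grid.
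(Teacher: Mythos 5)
Your argument is correct and reaches the paper's conclusion by a genuinely different route, so let me compare the two. The paper's proof centres everything at the global mean, proves a zero-sum lemma $\sum_i\bu'_{c_1^i}=\mathbf 0$ so that the training-set mean can stand in for the global mean, writes the $2n$ observed pair embeddings as a $2n\times 2n$ linear system in the centred factors, asserts that this system is full rank, and finishes with projections onto the two concept subspaces. Your gauge-fixing formulation is cleaner on exactly the point the paper fudges: the design matrix of that system is \emph{not} full rank --- its kernel is spanned by $(\mathbf 1_n,-\mathbf 1_n)^{\top}$ (subtract the diagonal block of rows from the cyclic block and note that $C-I$ annihilates the all-ones vector for a cyclic permutation $C$), and this kernel direction is precisely your gauge freedom $(\ba,\bb)\mapsto(\ba+\bv,\bb-\bv)$; the paper's uniqueness secretly comes from the zero-sum normalisation, whereas you never need uniqueness because everything you feed the classifier ($\ba_i-\ba_1$ and $\ba_1+\bb_1$) is gauge-invariant. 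Your classifier step is also the technically correct one: the paper claims a decomposition $\bw=\bu_1+\bu_2$ with $\bu_1\perp\bu_2$, but $U_1\cap U_2=\{\mathbf 0\}$ yields a unique, generally \emph{non-orthogonal} decomposition, so the right object is your oblique projection onto $V_1$ along $V_2\oplus U$, and your affine nearest-mean scores make ``linear classifier'' literal. The one place I would push back is the claim that the dimension-$(2n-1)$ hypothesis ``forces'' $\dim V_1=\dim V_2=n-1$ and $V_1\cap V_2=\{\mathbf 0\}$: that is true under your reading of the hypothesis as $\dim(\mathbb R\bp+V_1+V_2)=2n-1$, but under the literal reading (span of the $2n$ listed concept vectors) it is not --- for instance $\bu_{c_1^2}-\bu_{c_1^1}=\bu_{c_2^2}-\bu_{c_2^1}$ with all else generic gives exactly one dependency, hence joint dimension $2n-1$, yet makes the combinations $(2,1)$ and $(1,2)$ feature-identical and so unclassifiable. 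The paper's Part 3 silently assumes the same transversality you derive, so both proofs rest on the same effective hypothesis; yours just makes that dependence, and the role of the connected $2n$-cycle induced by the diagonal-plus-shifted-diagonal training pattern, explicit.
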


This proposition illustrates the benefit of perfectly compositional representations: with just two examples per concept value, perfect generalization is possible if the feature space is linearly factorized. We view this as a starting point—while the assumption of linearly independent factors is often satisfied in both from-scratch and pre-trained models, it can break down as the number of values grows, making joint linear independence impossible - e.g., such factors may occupy low-dimensional subspaces \cite{sonthalia2025rankabilityvisualembeddings}. We expect that this assumption can be relaxed, and that a more complete understanding of the setting is possible in future work.

\begin{takeawaybox}
    \textbf{Takeaway \S\ref{sec:benefits_linear_factorization}:} When linear factorization is present, perfect compositional generalization is possible with just two combinations per concept value.
\end{takeawaybox}

\section{Do large pre-trained models generalize compositionally?} \label{sec:pretrained}

Our analysis of models trained from scratch revealed that linear structure emerges naturally when models are exposed to diverse concept combinations. This finding raises a question: Have large-scale pretrained models already learned such linear structure through their pretraining? 
To investigate this, we evaluate pretrained models using two complementary approaches. We first test for the ideal linear structure from our theoretical framework (Proposition~\ref{thm:minimal_compositional}), which would enable perfect generalization. This reveals how close existing models are to this optimal linear structure. Second, we use (non-)linear probing to assess general concept accessibility in the feature space. Comparing these approaches allows us to distinguish between models that simply encode concept information and those that represent it in a structured, linear manner.

\subsection{Evaluating via linear factorization}

\textbf{Measuring linearity.} Building on our earlier findings showing the natural emergence of linearly factored representations, we test how well the recovered concept value representations (detailed algorithm in Appendix \ref{alg:recover_factored}) can be used to classify novel concept combinations. Classification of a new input $\bx$ can then be performed by projecting the representation $f(\bx)$ onto the $\bu$ and $\bv$ values to acquire labels for both concepts. 

We calculate accuracy for each concept using this approach and illustrate the results in Figure \ref{fig:assume_linear_structure}. Certain concept pairs show strong amenability to linear representation across all models. On \textsc{PUG-Animal}, all models achieve exceptionally high accuracy ($>$90\%) on \textsc{world-name} concept, suggesting more linear representations. The best model consistently exceeds 90\% accuracy on \textit{some} concept classification across all datasets. Additionally, models show clear specialization: \textsc{CLIP} excels at color-based tasks (highest accuracy on \textsc{CMNIST} color-digit and \textsc{Shapes3D} object-hue), while \textsc{DINOv2} performs best on shape-based concepts (e.g. on scale, shape, orientation, and character).  

While no model achieves the perfect generalization predicted by our theoretical analysis for ideally linear representations, these results demonstrate that pre-trained models exhibit partial linearity in their representations, varying in strength across concept types. Strong performance on some concept pairs supports our hypothesis that linear representation organization facilitates compositional generalization.

\subsection{Evaluating generalization via probing}

While the linear factorization analysis tests for an ideal compositional structure, we also employ a more direct test of generalization: probing. In this approach, we train a simple classifier (a non-linear probe) on the model's features for the \textit{seen} concept combinations from our training set and evaluate it on the \textit{unseen} combinations. This directly measures whether a consistent mapping from features to concepts can be learned and transferred. For each model and dataset, we compute the average accuracy for a given $k$ value, keeping $n = n_{\text{max}}$. To enable fair comparison across datasets, we normalize each model's performance by its maximum accuracy and aggregate the results, as shown in Figure \ref{fig:compare_probing}.

\begin{figure}[h]
    \includegraphics{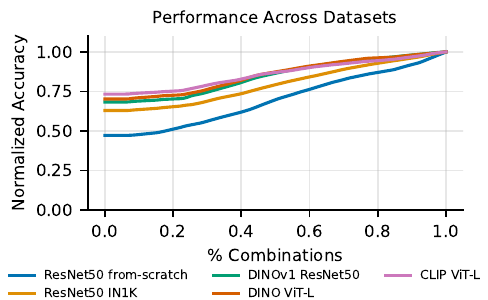}
    \vspace{-1.5em}
    \caption{\small \textbf{Even with pretraining, models struggle with compositional generalization.} Despite the benefits of pretraining, models still face challenges in generalizing to unseen concept combinations. While larger models like \textsc{CLIP} and \textsc{DINO ViT-L} show the strongest performance, the persistent gap between pretrained and from-scratch models indicates that current pretraining approaches do not generalize compositionally well.}
    \label{fig:compare_probing}
    \vspace{-1em}    
\end{figure}

All pre-trained models consistently outperform the from-scratch \textsc{ResNet50}, showing that pre-training provides a significant advantage. However, it is not a complete solution, as all models improve as the diversity of training combinations increases. Full results are in Appendix \ref{app:probing_pretrained}.

\begin{takeawaybox}
    \textbf{Takeaway \S\ref{sec:pretrained}:} Pre-training is not a substitute for data diversity. While large models like \textsc{CLIP} and \textsc{DINO ViT-L} develop partially linear representations, our analysis shows they only generalize reliably after training a downstream model on a diverse set of concept combinations.
\end{takeawaybox}

\section{Conclusion} 

In this work, we systematically investigated the conditions under which vision models achieve compositional generalization, focusing on the distinct roles of data scale versus data diversity. Our findings reveal that merely increasing the volume of training data is insufficient for generalization to novel concept combinations. Instead, data diversity is the critical factor. We identified a three-phase learning dynamics where models transition from learning spurious correlations to discriminative features, and finally to a linearly structured representation space only when trained with sufficient combinatorial diversity. We provide theoretical evidence for the power of this structure, proving that such linear factorization allows for perfect generalization from a minimal number of training examples in an idealized setting. When we evaluated large-scale pretrained models through this lens, we found they exhibit some of this compositional structure but remain far from perfect, achieving mixed results that highlight their limitations.

Ultimately, our work suggests that while current scaling paradigms are beneficial, they do not automatically confer robust compositional abilities due to the inherent combinatorial sparsity of large-scale datasets. Achieving compositional generalization will likely require a more deliberate focus on structured data diversity to induce the necessary representational geometry in vision models.

\section*{Acknowledgments}

We thank the anonymous reviewers for their valuable feedback, Yujin Jeong, Simon Buchholz, Yi Ren, Samuel Lippl, Ankit Sonthalia, Alexander Rubinstein, and Martin Gubri for helpful discussions, and the International Max Planck Research School for Intelligent Systems (IMPRS-IS) for supporting Arnas Uselis. This work was supported by the Tübingen AI Center.

\section*{Impact statement}

This work advances understanding of compositional learning in vision models, which could enable more data-efficient and reliable AI systems.  We release our code and datasets publicly to promote reproducible research and responsible development of these capabilities.

{
    \bibliographystyle{icml2025}
    \bibliography{references}
}

\newpage
\appendix
\onecolumn

\section{Appendix}

\appendix

\section{Experimental setup and implementation} \label{app:experimental_setup}

\subsection{Implementation details}

In this section we provide additional details on the implementation of the experiments.

\textbf{Optimization.} All models are trained using the Adam \citep{kingmaAdamMethodStochastic2017} optimizer. Based on an initial grid search, we use a learning rate of $10^{-4}$ for ResNet training from scratch and $10^{-3}$ for probing pre-trained features. All models are trained for 100 epochs with a batch size of 64.

\textbf{Train/Test splits.} For each concept value $i$, we observe combinations with values $j$ where $(i - j + n) \bmod n < k$, and evaluate on all other combinations. This creates a clear distinction between combinations seen during training and those requiring compositional generalization.

The key idea is creating a training set that is balanced such that each concept value is observed with equal frequency. For each concept value $i \in \{0,\ldots,n-1\}$, we observe exactly $k$ combinations during training, defining our training and test sets as:
\begin{equation}\label{eq:training_test_sets}
    \begin{split}
    \mathcal{C}_{\text{train}} &:= \bigcup_{i=1}^n \left\{(i, (i+j \bmod n)) : j \in \{0,\dots, k-1\}\right\}, \\
    \mathcal{C}_{\text{test}} &:= (\mathcal{C}_1 \times \mathcal{C}_2) \setminus \mathcal{C}_{\text{train}}.
    \end{split}
\end{equation}

This construction ensures that: (1) each concept value appears in exactly $k$ training combinations, (2) the test set contains $(n-k) \cdot n$ novel combinations, and (3) the split is deterministic and reproducible across experiments.

\textbf{Concept value selection.} For each experiment with parameters $n$ and $k$, we select $n$ values for each of our two target concepts that are maximally spread across their respective concept spaces. Specifically, if a concept has $|\mathcal{C}_{\text{max}}|$ possible values, we select values at indices $\{i \cdot \lfloor |\mathcal{C}_{\text{max}}|/n \rfloor\}_{i=0}^{n-1}$ to ensure even coverage.

\textbf{Sampling procedure.} Within each valid training combination (each "cell" in our concept grid), we sample $n_{\text{cell}}$ examples uniformly from all possible variations of the remaining unlabeled concepts $\mathcal{C}_{\text{vary}}$ (like position, orientation, background, etc.). This uniform sampling across $|\mathcal{C}_{\text{vary}}|$ possible variations ensures balanced representation of each concept combination across different visual contexts.

\section{Proofs
} \label{app:theoretical_foundations}

In this section we provide the proofs for our main theoretical results.

\textbf{Notation.} We summarize the notation used throughout the proofs, though we reintroduce each term where appropriate.

\begin{itemize}
\item \textit{Spaces and mappings:}
    \begin{itemize}
    \item $\mathcal{X}$ represents the input space (images)
    \item $\mathcal{C} = \mathcal{C}_1 \times \mathcal{C}_2 \times \cdots \times \mathcal{C}_c$ represents the concept space
    \item $\mathcal{C}_i$ is the $i$-th concept dimension (e.g., color, shape)
    \item $c: \mathcal{X} \rightarrow \mathcal{C}$ is the mapping from images to concept values
    \item $c(\mathbf{x}) = (c_1,\dots,c_c)$ gives the concept values for image $\mathbf{x}$
    \item $c_i$ denotes the value of the $i$-th concept
    \end{itemize}

\item \textit{Framework parameters:}
    \begin{itemize}
    \item $n$ is the number of concept values per dimension in the $(n,k)$ framework
    \item $k$ is the number of training combinations per concept value
    \item $c$ is the total number of concept dimensions
    \end{itemize}

\item \textit{Feature representations:}
    \begin{itemize}
    \item $f(\mathbf{x})$ is the feature extractor output for image $\mathbf{x}$
    \item $\mathbf{f} = \frac{1}{|\mathcal{D}|}\sum_{\mathbf{x} \in \mathcal{D}} f(\mathbf{x})$ is the global mean embedding
    \item $\mathbf{u}_{c_i}$ represents the true concept vector for value $c_i$
    \item $\mathbf{u}'_{c_i}$ is the recovered centred concept vector for value $c_i$
    \item $\mathbf{u}'_{c_i,c_j}$ denotes the pairwise joint embedding for values $c_i, c_j$
    \end{itemize}

\item \textit{Datasets:}
    \begin{itemize}
    \item $\mathcal{D}$ represents a dataset of image-concept pairs
    \item $\mathcal{D}_{\mathcal{C}}$ is the dataset over all possible concept combinations
    \item $\mathcal{D}_{\text{train}}$ and $\mathcal{D}_{\text{test}}$ are the training and test datasets with limited and unseen combinations, respectively
    \item $\mathcal{D}_{c_i}$ is the subset of $\mathcal{D}$ containing concept value $c_i$
    \item $\mathcal{D}_{c_i,c_j}$ contains both values $c_i$ and $c_j$
    \end{itemize}

\item \textit{Training constructs:}
    \begin{itemize}
    \item $\mathcal{C}_{\text{train}}$ is the set of observed concept combinations during training
    \item $\bar{\mathbf{f}}_{i,j}$ represents the mean embedding for combination $(i,j)$
    \end{itemize}
\end{itemize}

Let $\mathcal{X}$ denote the input space and $\mathcal{C} = \mathcal{C}_1 \times \mathcal{C}_2 \times \cdots \times \mathcal{C}_c$ represent the concept space. We assume a mapping $c: \mathcal{X} \rightarrow \mathcal{C}$ that identifies for each image $\mathbf{x} \in \mathcal{X}$ its corresponding concept values $c(\mathbf{x}) = (c_1,\dots,c_c) \in \mathcal{C}$. 

We denote $\cD_{\cC}$ as the dataset over all possible concept combinations. In practise, we only observe limited combinations, as discussed in Section \ref{app:experimental_setup}. We denote such a dataset as $\cD_{\text{train}}$ and $\cD_{\text{test}}$ for the training and test sets, respectively.

We also restate the linear factorization definition from the main text:
\begin{definition}[Linearly factored embeddings \citep{tragerLinearSpacesMeanings2023}]
    Given a concept space $\mathcal{C} = \mathcal{C}_1 \times \dots \times \mathcal{C}_c$, a collection of vectors $\{\mathbf{u}_c\}_{c\in\mathcal{C}}$ is linearly factored if there exist vectors $\mathbf{u}_{c_i} \in \mathbb{R}^d$ for all $c_i \in \mathcal{C}_i$ ($i = 1, \dots, c$), which we refer to as concept representations, such that for all $\bc = (c_1, \dots, c_c)$:
    \begin{equation}
    \mathbf{u}_c = \mathbf{u}_{c_1} + \dots + \mathbf{u}_{c_c}.
    \end{equation}
\end{definition}

Assuming linear factorization,
\[f(\mathbf{x})=\sum_{\ell=1}^{k}\mathbf{u}_{c_\ell(\mathbf{x})},\]
and given a dataset \(\mathcal{D}=\{(\mathbf{x}_j,\mathbf{c}_j)\}_{j=1}^s\) with $s := \prod_{i=1}^c |\mathcal{C}_i|$,
with image–concept pairs, we can recover a representation (up
to a global shift shared by all factors) for each concept value by
averaging feature vectors across all combinations that contain that
value \citep{tragerLinearSpacesMeanings2023}.  Formally, for a value \(c_i\in\mathcal C_i\) let
\begin{equation}\label{eq:factored_rep_expectation}
\mathbf u'_{c_i}
   \;:=\;
   \frac1{|\mathcal D_{c_i}|}\sum_{\mathbf x\in\mathcal D_{c_i}}
     \bigl[f(\mathbf x) - \mathbf f\bigr],
\qquad
\mathbf f\;:=\;\frac1{|\mathcal D|}\sum_{\mathbf x\in\mathcal D}f(\mathbf x),
\end{equation}
Thus \(\mathbf u'_{c_i}\) is the
conditional mean feature vector, centred by the global mean
\(\mathbf f\).

We first describe the relationship between the ground truth factors $\bu_{c_i}$ and the recovered ones $\bu'_{c_i}$. These relationships only hold for the case when the contstructed factors are recovered from the full dataset. 

\begin{lemma}[Relation to ground truth concept vectors] \label{app:lmma_relation_u_u}
Let $\mathbf{u}_{c_i}$ denote the true concept vector for value $c_i$, and $\mathbf{u}'_{c_i}$ the recovered one from~\eqref{eq:factored_rep_expectation}. Over the full dataset,
\[
\mathbf{u}'_{c_i}=\mathbf{u}_{c_i}-\frac{1}{|\mathcal{C}_i|}\sum_{c'_i\in\mathcal{C}_i}\mathbf{u}_{c'_i}.
\]
\end{lemma}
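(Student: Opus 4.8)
The statement to prove is the final Lemma (``Relation to ground truth concept vectors''), which asks us to show that the recovered concept vector $\mathbf{u}'_{c_i}$ from Equation~\eqref{eq:factored_rep_expectation} equals the true concept vector $\mathbf{u}_{c_i}$ shifted by the average of all true concept vectors within concept $i$. The plan is a direct computation: substitute the linear factorization hypothesis $f(\mathbf{x}) = \sum_{\ell=1}^{c} \mathbf{u}_{c_\ell(\mathbf{x})}$ into the definitions of $\mathbf{f}$ and $\mathbf{u}'_{c_i}$, and simplify using the product structure of the concept space and the fact that $\mathcal{D} = \mathcal{D}_{\mathcal{C}}$ ranges over all combinations.

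First I would compute the global mean $\mathbf{f}$. Writing $f(\mathbf{x}) = \sum_{\ell=1}^{c} \mathbf{u}_{c_\ell(\mathbf{x})}$ and averaging over the full dataset $\mathcal{D}_{\mathcal{C}}$ (which contains each combination $\mathbf{c} \in \mathcal{C}$ equally often, say $m$ times each), linearity of the sum lets me interchange the average over $\mathbf{x}$ with the sum over coordinates $\ell$. For each fixed $\ell$, averaging $\mathbf{u}_{c_\ell(\mathbf{x})}$ over all combinations gives the within-concept-$\ell$ average $\tfrac{1}{|\mathcal{C}_\ell|}\sum_{c'_\ell \in \mathcal{C}_\ell} \mathbf{u}_{c'_\ell}$, because every value of concept $\ell$ appears with equal frequency by the Cartesian product structure. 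Hence $\mathbf{f} = \sum_{\ell=1}^{c} \tfrac{1}{|\mathcal{C}_\ell|}\sum_{c'_\ell} \mathbf{u}_{c'_\ell}$.

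Next I would compute $\mathbf{u}'_{c_i}$ by averaging $f(\mathbf{x}) - \mathbf{f}$ over $\mathcal{D}_{c_i}$, the subset of combinations whose $i$-th coordinate is fixed to $c_i$. The same interchange-of-sums argument applies: the $\ell = i$ term contributes exactly $\mathbf{u}_{c_i}$ (constant on $\mathcal{D}_{c_i}$), while each $\ell \neq i$ term again averages to the within-concept-$\ell$ mean $\tfrac{1}{|\mathcal{C}_\ell|}\sum_{c'_\ell} \mathbf{u}_{c'_\ell}$, since conditioning on coordinate $i$ does not affect the marginal distribution of coordinate $\ell$ in a product space. Therefore the average of $f(\mathbf{x})$ over $\mathcal{D}_{c_i}$ equals $\mathbf{u}_{c_i} + \sum_{\ell \neq i} \tfrac{1}{|\mathcal{C}_\ell|}\sum_{c'_\ell} \mathbf{u}_{c'_\ell}$. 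Subtracting $\mathbf{f}$, all the $\ell \neq i$ terms cancel, leaving $\mathbf{u}_{c_i} - \tfrac{1}{|\mathcal{C}_i|}\sum_{c'_i \in \mathcal{C}_i} \mathbf{u}_{c'_i}$, which is the claimed identity.

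There is no real obstacle here beyond bookkeeping; the only point requiring a word of care is the use of the product structure of $\mathcal{C}$ (equivalently, the balanced nature of $\mathcal{D}_{\mathcal{C}}$) to justify that marginals and conditionals of concept-value frequencies are uniform — this is exactly where the hypothesis ``over the full dataset'' is used, and the lemma can fail for $\mathcal{D}_{\text{train}}$ where combinations are not balanced in this way. I would state this product/balance observation explicitly as the one nontrivial ingredient, then let the two averaging computations proceed in parallel.
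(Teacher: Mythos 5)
Your proposal is correct and follows essentially the same route as the paper's proof: substitute the linear factorization into the definitions of $\mathbf{f}$ and $\mathbf{u}'_{c_i}$, interchange the sums, use the fact that each value of every concept $\ell \neq i$ occurs equally often within $\mathcal{D}_{c_i}$ (the product/balance structure you correctly flag as the one nontrivial ingredient), and cancel the $\ell \neq i$ terms against the global mean. No gaps; nothing further needed.
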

\begin{proof}
Start from the definition \eqref{eq:factored_rep_expectation} and substitute the linear factorisation $f(\mathbf x)=\sum_{\ell=1}^c\mathbf u_{c_\ell(\mathbf x)}$:
\begin{align*}
\mathbf u'_{c_i}&=\frac{1}{|\mathcal D_{c_i}|}\sum_{\mathbf x\in\mathcal D_{c_i}}\bigl[f(\mathbf x)-\mathbf f\bigr]\\
&=\frac{1}{|\mathcal D_{c_i}|}\sum_{\mathbf x\in\mathcal D_{c_i}}\sum_{\ell=1}^{c}\mathbf u_{c_\ell(\mathbf x)}\; -\;\mathbf f.\tag{1}
\end{align*}

Interchange the sums in (1).  For the term with $\ell=i$ each $\mathbf x\in\mathcal D_{c_i}$ contributes $\mathbf u_{c_i}$, hence
\[
\frac{1}{|\mathcal D_{c_i}|}\sum_{\mathbf x\in\mathcal D_{c_i}}\mathbf u_{c_i}=\mathbf u_{c_i}.
\]
For any $\ell\neq i$ each value $c'_\ell\in\mathcal C_\ell$ occurs equally often inside $\mathcal D_{c_i}$, namely $|\mathcal D_{c_i}|/|\mathcal C_\ell|$ times.  Therefore
\[
\frac{1}{|\mathcal D_{c_i}|}\sum_{\mathbf x\in\mathcal D_{c_i}}\mathbf u_{c_\ell(\mathbf x)} 
   = \frac{1}{|\mathcal C_\ell|}\sum_{c'_\ell\in\mathcal C_\ell}\mathbf u_{c'_\ell}.
\]
Summing these contributions and using the explicit formula for the global mean
\[
\mathbf f = \frac{1}{|\mathcal D|}\sum_{\mathbf x\in\mathcal D}f(\mathbf x) = \sum_{\ell=1}^{c}\frac{1}{|\mathcal C_\ell|}\sum_{c'_\ell\in\mathcal C_\ell}\mathbf u_{c'_\ell},
\]
it follows that
\[
\mathbf u'_{c_i}=\mathbf u_{c_i}+\sum_{\ell\neq i}\frac{1}{|\mathcal C_\ell|}\sum_{c'_\ell}\mathbf u_{c'_\ell}-\mathbf f
               = \mathbf u_{c_i}-\frac{1}{|\mathcal C_i|}\sum_{c'_i\in\mathcal C_i}\mathbf u_{c'_i},
\]
as claimed. 
\end{proof}

It also follows that this construction of factors $\bu_{c_i}$ leads to recovery of the sum of factored embeddings up to a global mean. Importantly, if full dataset $\cD_{\cC}$ is available, normalizing the mean of the embeddings (i.e. setting $\mathbf f := \mathbf{0}$) is possible. 

\begin{lemma}[Reconstruction of a centred embedding]\label{lemma:reconstruct}
For any $\mathbf{x}$ with concept values $(c_1(\mathbf{x}),\dots,c_c(\mathbf{x}))$
\[
 f(\mathbf{x})=\mathbf f+\sum_i \mathbf u'_{c_i(\mathbf{x})}.
\]
\end{lemma}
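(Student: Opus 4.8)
The plan is to obtain the identity directly from the preceding lemma (the relation between the recovered vectors $\mathbf{u}'_{c_i}$ and the ground-truth factors $\mathbf{u}_{c_i}$, Lemma~\ref{app:lmma_relation_u_u}) together with the explicit closed form of the global mean $\mathbf{f}$ that was computed inside its proof. Both of these hold under the standing assumption that the averages in \eqref{eq:factored_rep_expectation} are taken over a balanced dataset --- each concept value co-occurring equally often with every value of every other concept, e.g.\ the full $\mathcal{D}_{\mathcal{C}}$ --- so no new machinery is needed; the statement is essentially a one-line rearrangement of Lemma~\ref{app:lmma_relation_u_u}.

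Concretely, I would first invoke Lemma~\ref{app:lmma_relation_u_u} to write, for each concept index $i$,
\[
\mathbf{u}'_{c_i(\mathbf{x})} = \mathbf{u}_{c_i(\mathbf{x})} - \frac{1}{|\mathcal{C}_i|}\sum_{c'_i\in\mathcal{C}_i}\mathbf{u}_{c'_i}.
\]
Summing over $i = 1,\dots,c$ and applying the linear factorization $f(\mathbf{x}) = \sum_{i=1}^c \mathbf{u}_{c_i(\mathbf{x})}$ to the first group of terms gives
\[
\sum_{i=1}^c \mathbf{u}'_{c_i(\mathbf{x})} = f(\mathbf{x}) - \sum_{i=1}^c \frac{1}{|\mathcal{C}_i|}\sum_{c'_i\in\mathcal{C}_i}\mathbf{u}_{c'_i}.
\]
Finally I would recognize the subtracted double sum as exactly the closed form of the global mean derived while proving Lemma~\ref{app:lmma_relation_u_u}, namely $\mathbf{f} = \sum_{i=1}^c |\mathcal{C}_i|^{-1}\sum_{c'_i\in\mathcal{C}_i}\mathbf{u}_{c'_i}$; substituting this and moving $\mathbf{f}$ to the other side yields $f(\mathbf{x}) = \mathbf{f} + \sum_{i=1}^c \mathbf{u}'_{c_i(\mathbf{x})}$, as claimed.

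Since the argument is purely algebraic, there is no real obstacle; the only thing worth flagging is bookkeeping. The cancellation relies on the balancedness used in Lemma~\ref{app:lmma_relation_u_u} (otherwise $\mathbf{u}'_{c_i}$ inherits value-dependent weights from the other factors and the telescoping fails), so I would state that hypothesis explicitly rather than silently inherit it. I would also append the remark already anticipated in the text: when the full combination set is available one may recentre the features so that $\mathbf{f} = \mathbf{0}$, in which case the lemma specializes to the clean reconstruction $f(\mathbf{x}) = \sum_{i=1}^c \mathbf{u}'_{c_i(\mathbf{x})}$, i.e.\ the recovered per-value vectors reconstitute the linear factorization exactly.
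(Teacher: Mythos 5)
Your proposal is correct and follows exactly the paper's own argument: apply Lemma~\ref{app:lmma_relation_u_u} to each $c_i(\mathbf{x})$, sum over $i$, use the linear factorization $f(\mathbf{x})=\sum_i \mathbf{u}_{c_i(\mathbf{x})}$, and identify the residual double sum with the global mean $\mathbf{f}$. Your extra remarks on the balancedness hypothesis and on recentring so that $\mathbf{f}=\mathbf{0}$ are consistent with the surrounding text and add nothing that changes the argument.
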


\begin{proof}
Using Lemma \ref{app:lmma_relation_u_u} we have for every concept value $c_i$
\[
\mathbf u'_{c_i}=\mathbf u_{c_i}-\frac{1}{|\mathcal C_i|}\sum_{c'_i\in\mathcal C_i}\mathbf u_{c'_i}.
\]
Applying this identity to the particular values $c_i(\mathbf x)$ of the sample $\mathbf x$ and summing over $i=1,\dots,k$ yields
\[
\sum_{i=1}^c \mathbf u'_{c_i(\mathbf x)}
  =\sum_{i=1}^c \mathbf u_{c_i(\mathbf x)}
   -\sum_{i=1}^c\frac{1}{|\mathcal C_i|}\sum_{c'_i\in\mathcal C_i}\mathbf u_{c'_i}
  = f(\mathbf x)-\mathbf f,
\]
where the last equality uses $f(\mathbf x)=\sum_i \mathbf u_{c_i(\mathbf x)}$ and the definition of the global mean $\mathbf f$.
\end{proof}

In what follows we study compositional settings where the concept space
may include many factors, but only two factors,
\(\mathcal{C}_1\) and \(\mathcal{C}_2\), are observed; the remaining
factors \(\mathcal{C}_3,\dots,\mathcal{C}_c\) are unobserved. Importantly, factors $\mathcal{C}_1$ and $\mathcal{C}_2$ exhibit a correlation due to the $(n,k)$ framework.

Next, we establish a convenient property of the factored representations.

\begin{lemma}[Zero-sum embeddings]\label{lemma:zero_sum_embeddings}
  For any concept dimension \(i\in\{1,\dots,c\}\),
  \[
  \sum_{c_i\in\mathcal C_i}\mathbf u'_{c_i}=\mathbf 0 .
  \]
  \end{lemma}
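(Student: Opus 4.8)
The plan is to prove this directly from the definition \eqref{eq:factored_rep_expectation} of the recovered concept vectors, using the relation to the ground-truth vectors established in Lemma~\ref{app:lmma_relation_u_u}. First I would recall that for each value $c_i \in \mathcal{C}_i$, Lemma~\ref{app:lmma_relation_u_u} gives
\[
\mathbf{u}'_{c_i} = \mathbf{u}_{c_i} - \frac{1}{|\mathcal{C}_i|}\sum_{c'_i \in \mathcal{C}_i} \mathbf{u}_{c'_i}.
\]
Summing this identity over all $c_i \in \mathcal{C}_i$, the first term contributes $\sum_{c_i \in \mathcal{C}_i} \mathbf{u}_{c_i}$, while the second term is a constant vector added $|\mathcal{C}_i|$ times, contributing $|\mathcal{C}_i| \cdot \frac{1}{|\mathcal{C}_i|}\sum_{c'_i \in \mathcal{C}_i} \mathbf{u}_{c'_i} = \sum_{c'_i \in \mathcal{C}_i} \mathbf{u}_{c'_i}$. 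These two sums cancel, yielding $\mathbf{0}$.

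Alternatively — and perhaps more cleanly, since it avoids invoking the ground-truth vectors at all — I would argue directly: $\sum_{c_i \in \mathcal{C}_i} \mathbf{u}'_{c_i} = \sum_{c_i} \frac{1}{|\mathcal{D}_{c_i}|}\sum_{\mathbf{x} \in \mathcal{D}_{c_i}} [f(\mathbf{x}) - \mathbf{f}]$. Because the dataset $\mathcal{D}$ (here the full dataset $\mathcal{D}_{\mathcal{C}}$ over all combinations) is balanced, each $|\mathcal{D}_{c_i}|$ equals $|\mathcal{D}|/|\mathcal{C}_i|$, and as $c_i$ ranges over $\mathcal{C}_i$ the sets $\mathcal{D}_{c_i}$ partition $\mathcal{D}$. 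Hence $\sum_{c_i}\frac{1}{|\mathcal{D}_{c_i}|}\sum_{\mathbf{x}\in\mathcal{D}_{c_i}} f(\mathbf{x}) = \frac{|\mathcal{C}_i|}{|\mathcal{D}|}\sum_{\mathbf{x}\in\mathcal{D}} f(\mathbf{x}) = |\mathcal{C}_i|\,\mathbf{f}$, while $\sum_{c_i}\frac{1}{|\mathcal{D}_{c_i}|}\sum_{\mathbf{x}\in\mathcal{D}_{c_i}}\mathbf{f} = |\mathcal{C}_i|\,\mathbf{f}$ as well, and the difference vanishes.

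There is essentially no obstacle here; this is a short bookkeeping lemma. The only point requiring a little care is making explicit the balancedness assumption that underlies Lemma~\ref{app:lmma_relation_u_u} — namely that the recovered vectors are computed over the full dataset $\mathcal{D}_{\mathcal{C}}$ in which every concept value appears in equally many combinations — so that the counting argument (each value $c'_\ell$ of any other dimension appears $|\mathcal{D}_{c_i}|/|\mathcal{C}_\ell|$ times inside $\mathcal{D}_{c_i}$) is valid. I would state at the outset that the identity is over the full dataset and then let the one-line cancellation finish the proof.
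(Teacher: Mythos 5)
Your proposal is correct; your second, direct argument (partitioning $\mathcal{D}$ into the equal-sized sets $\mathcal{D}_{c_i}$ so that both the feature sum and the subtracted means contribute $|\mathcal{C}_i|\,\mathbf{f}$ and cancel) is exactly the proof the paper gives. Your first variant via Lemma~\ref{app:lmma_relation_u_u} is also valid, though it needlessly routes through the linear-factorization assumption, which the direct counting argument does not require.
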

  
  \begin{proof}
  Let \(\displaystyle
  \mathbf f:=\frac{1}{|\mathcal D|}\sum_{\mathbf x\in\mathcal D}f(\mathbf x)\)
  be the global mean.  
  For each value \(c_i\in\mathcal C_i\) set
  \[
  \mathcal D_{c_i}:=\{\mathbf x\in\mathcal D \mid c_i(\mathbf x)=c_i\},
  \qquad
  m:=|\mathcal D_{c_i}|\;\;(\text{same for every }c_i),
  \]
  
  \noindent Summing over \(c_i\) gives
  \begin{align}
    \sum_{c_i\in\mathcal C_i}\mathbf u'_{c_i}
      &=\sum_{c_i\in\mathcal C_i}
         \Bigl[
           \tfrac1{m}\!\sum_{\mathbf x\in\mathcal D_{c_i}}\! f(\mathbf x)
           -\mathbf f
         \Bigr] \\
      &=\tfrac1{m}
         \sum_{c_i\in\mathcal C_i}\;
         \sum_{\mathbf x\in\mathcal D_{c_i}}\! f(\mathbf x)
         \;-\;
         |\mathcal C_i|\,\mathbf f
         \\
      &=\tfrac1{m}
         \sum_{\mathbf x\in\mathcal D} f(\mathbf x)
         \;-\;
         |\mathcal C_i|\,\mathbf f
         \\
      &=\tfrac{|\mathcal D|}{m}\,\mathbf f
         \;-\;
         |\mathcal C_i|\,\mathbf f
         \quad(|\mathcal D|=|\mathcal C_i|\,m)\\
      &=|\mathcal C_i|\,\mathbf f-|\mathcal C_i|\,\mathbf f
         =\mathbf 0.
    \end{align}
  \end{proof}

In practice, we often only observe a subset of concept combinations. To accomodate such a constraint, we formalize it through pairwise joint embeddings:

\begin{definition}[Pairwise joint embedding] \label{def:dfn_pairwise_joint_embedding}
    Given a concept space $\mathcal C=\mathcal C_1\times\!\dots\times\!\mathcal C_c$,
    the pairwise joint embedding for factors $i\neq j$ and values
    $c_i\in\mathcal C_i,\;c_j\in\mathcal C_j$ is
    \begin{equation}
    \mathbf u'_{c_i,c_j}
      =\frac{1}{|\mathcal D_{c_i,c_j}|}
       \sum_{\bx\in\mathcal D_{c_i,c_j}}
          \bigl[f(\bx)-\mathbf f\bigr],
    \qquad
    \mathcal D_{c_i,c_j}
      :=\{\bx\in\mathcal D\mid c(\bx)_i=c_i,\;c(\bx)_j=c_j\}.
    \end{equation}
    \end{definition}

    \begin{lemma}[Additivity of joint embeddings]\label{lemma:additivity}
    Under a linear factorisation s.t. $f(\bx)=\sum_{\ell=1}^c\mathbf u_{c_{\ell}(\bx)}$ holds,
    \begin{equation}
    \mathbf u'_{c_i,c_j}=\mathbf u'_{c_i}+\mathbf u'_{c_j}.
    \end{equation}
    \end{lemma}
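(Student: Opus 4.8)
The plan is to reduce the claim to the already-established facts about the recovered single-concept embeddings, namely the reconstruction identity (Lemma~\ref{lemma:reconstruct}) and the zero-sum property (Lemma~\ref{lemma:zero_sum_embeddings}). Starting from Definition~\ref{def:dfn_pairwise_joint_embedding}, rewrite the centred feature vector inside the average via Lemma~\ref{lemma:reconstruct}, which gives $f(\bx)-\mathbf{f}=\sum_{\ell=1}^{c}\bu'_{c_\ell(\bx)}$ for every $\bx$. Substituting this into the definition of $\bu'_{c_i,c_j}$ yields
\[
\bu'_{c_i,c_j}=\frac{1}{|\cD_{c_i,c_j}|}\sum_{\bx\in\cD_{c_i,c_j}}\ \sum_{\ell=1}^{c}\bu'_{c_\ell(\bx)}.
\]

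Next I would split the inner sum into the terms with $\ell\in\{i,j\}$ and those with $\ell\notin\{i,j\}$. On $\cD_{c_i,c_j}$ the $i$-th and $j$-th concept values are pinned to $c_i$ and $c_j$, so those terms are constant over the average and contribute exactly $\bu'_{c_i}+\bu'_{c_j}$. For each remaining factor $\ell$, the structural input is that every value $c'_\ell\in\cC_\ell$ occurs equally often within $\cD_{c_i,c_j}$ (balanced sampling over the unlabeled variations, Appendix~\ref{app:experimental_setup}); hence averaging $\bu'_{c_\ell(\bx)}$ over $\bx\in\cD_{c_i,c_j}$ gives $\tfrac{1}{|\cC_\ell|}\sum_{c'_\ell\in\cC_\ell}\bu'_{c'_\ell}$, which vanishes by Lemma~\ref{lemma:zero_sum_embeddings}. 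Collecting the surviving terms leaves $\bu'_{c_i,c_j}=\bu'_{c_i}+\bu'_{c_j}$. An equivalent route bypasses Lemmas~\ref{lemma:reconstruct} and~\ref{lemma:zero_sum_embeddings}: expand $f(\bx)$ directly by the linear factorisation, average over $\cD_{c_i,c_j}$ using the same balance fact to replace each unobserved factor's contribution by $\tfrac{1}{|\cC_\ell|}\sum_{c'_\ell}\bu_{c'_\ell}$, subtract $\mathbf{f}=\sum_\ell\tfrac{1}{|\cC_\ell|}\sum_{c'_\ell}\bu_{c'_\ell}$ so those terms cancel, and recognise the remainder $\bu_{c_i}+\bu_{c_j}-\tfrac{1}{|\cC_i|}\sum_{c'_i}\bu_{c'_i}-\tfrac{1}{|\cC_j|}\sum_{c'_j}\bu_{c'_j}$ as $\bu'_{c_i}+\bu'_{c_j}$ via Lemma~\ref{app:lmma_relation_u_u}.

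The only genuine content — and the step needing the most care — is the balance claim that within $\cD_{c_i,c_j}$ every value of every other factor is equally represented; this is the exact analogue of the ``$\ell\neq i$'' step in the proof of Lemma~\ref{app:lmma_relation_u_u} and relies on the dataset being generated by uniform sampling over the unlabeled variations for each observed $(c_i,c_j)$ cell (in particular $\cD_{c_i,c_j}\neq\varnothing$, i.e. the pair is observed, as presumed by Definition~\ref{def:dfn_pairwise_joint_embedding}). I would state this sampling assumption explicitly; everything else is bookkeeping.
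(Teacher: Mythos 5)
Your proposal is correct and follows essentially the same route as the paper's proof: substitute the centred decomposition from Lemma~\ref{lemma:reconstruct} into Definition~\ref{def:dfn_pairwise_joint_embedding}, split off the pinned terms $\ell\in\{i,j\}$, and kill the remaining factors via the equal-occurrence balance argument together with Lemma~\ref{lemma:zero_sum_embeddings}. Your explicit flagging of the balanced-sampling assumption on $\cD_{c_i,c_j}$ is a reasonable clarification of a hypothesis the paper uses but states only in passing.
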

    
    \begin{proof}
      Define
      \[
      \mathcal D_{c_i,c_j}
         :=\bigl\{\mathbf x\in\mathcal D
                   \mid c(\mathbf x)_i=c_i,\;
                        c(\mathbf x)_j=c_j\bigr\},
      \qquad
      N_{c_i,c_j}:=|\mathcal D_{c_i,c_j}|.
      \]
      
      \noindent

      \noindent
      Substituting the centred decomposition $f(\mathbf x)=\mathbf f+\sum_{\ell=1}^{c}\mathbf u'_{c_\ell(\mathbf x)}$ from Lemma~\ref{lemma:reconstruct} to Definition~\ref{def:dfn_pairwise_joint_embedding} gives
      \begin{align}
        \mathbf u'_{c_i,c_j}
          &= \frac1{N_{c_i,c_j}}\sum_{\mathbf x\in\mathcal D_{c_i,c_j}}\bigl[f(\mathbf x)-\mathbf f\bigr]\\
          &= \frac1{N_{c_i,c_j}}\sum_{\mathbf x\in\mathcal D_{c_i,c_j}}\Bigl[\mathbf f+\sum_{\ell=1}^{c}\mathbf u'_{c_\ell(\mathbf x)}-\mathbf f\Bigr] \\[-4pt]
          &= \frac1{N_{c_i,c_j}}\sum_{\mathbf x\in\mathcal D_{c_i,c_j}}\sum_{\ell=1}^{c}\mathbf u'_{c_\ell(\mathbf x)}.
      \end{align}

      For every $\mathbf x\in\mathcal D_{c_i,c_j}$ we have $c_i(\mathbf x)=c_i$ and $c_j(\mathbf x)=c_j$. Hence the terms with $\ell=i$ and $\ell=j$ contribute exactly $\mathbf u'_{c_i}$ and $\mathbf u'_{c_j}$, respectively.
      
      For any $\ell\notin\{i,j\}$ each value $c_\ell'\in\mathcal C_\ell$ occurs equally often inside $\mathcal D_{c_i,c_j}$. Therefore
      \[
        \frac1{N_{c_i,c_j}}\sum_{\mathbf x\in\mathcal D_{c_i,c_j}} \mathbf u'_{c_\ell(\mathbf x)}
        = \frac1{|\mathcal C_\ell|}\sum_{c_\ell'\in\mathcal C_\ell}\mathbf u'_{c_\ell'}
        = \mathbf 0, \quad \text{by Lemma~\ref{lemma:zero_sum_embeddings}.}
      \]
      Collecting all contributions we obtain the desired identity
      \[
      \mathbf u'_{c_i,c_j}=\mathbf u'_{c_i}+\mathbf u'_{c_j}.
      \]
    \end{proof}

We now establish our main theoretical result on the minimal data requirements for compositional generalization. The derivations from the Lemmas above are appropriate under the assumption of a balanced training set. Due to the unlikely nature of certain concept combinations (as described in the $(n,k)$ framework), the main challenge is identifying the factors under such a setting. 

\begin{proposition}[Minimal compositional learning] \label{prop:minimal_learning}
  Let $f: \mathcal{X} \to \mathbb{R}^d$ be a feature extractor with linearly factored concept embeddings over $\cC$. Let $\{\mathbf{u}_{c_1^1}, \ldots, \mathbf{u}_{c_1^n}\}$ and $\{\mathbf{u}_{c_2^1}, \ldots, \mathbf{u}_{c_2^n}\}$ be the concept vectors for the first and second concepts respectively, where their joint span has dimension $2n-1$. Suppose we only observe joint representations for concept combinations $c_i, c_j \in \{1,\ldots,n\}$. Then $k=2$ combinations per concept value suffice to learn a linear classifier that perfectly generalizes to all $(n-k) \cdot n$ unseen combinations.
\end{proposition}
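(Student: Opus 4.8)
The plan is to reduce the statement to a linear-algebra fact about recovering the centred concept vectors $\mathbf u'_{c_1^a}$ and $\mathbf u'_{c_2^b}$ from only the $nk$ observed pairwise joint embeddings $\mathbf u'_{c_1^a,c_2^b}$, and then to build a classifier that reads off both concept labels by projecting onto these recovered vectors. The key structural ingredient is Lemma~\ref{lemma:additivity}, which tells us that for every observed combination $(a,b)$ we have the linear constraint $\mathbf u'_{c_1^a,c_2^b}=\mathbf u'_{c_1^a}+\mathbf u'_{c_2^b}$. Together with the two zero-sum normalizations $\sum_a \mathbf u'_{c_1^a}=\mathbf 0$ and $\sum_b \mathbf u'_{c_2^b}=\mathbf 0$ from Lemma~\ref{lemma:zero_sum_embeddings}, this is a linear system in the $2n$ unknown vectors $\{\mathbf u'_{c_1^a}\}_a\cup\{\mathbf u'_{c_2^b}\}_b$. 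First I would argue this system has a unique solution precisely when the bipartite ``observation graph'' $G$ on vertex set $\{a_1,\dots,a_n\}\sqcup\{b_1,\dots,b_n\}$, with an edge for each observed pair, is connected: connectedness lets us propagate a value chosen at one vertex to every other vertex along paths (each edge fixing the difference of endpoint values), and the single global degree of freedom is then killed by either zero-sum constraint.

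Second, I would check that the $(n,k)$ training design with $k=2$ indeed yields a connected observation graph. By the construction in Eq.~\eqref{eq:training_test_sets}, concept value $i$ is paired with values $i$ and $i+1 \bmod n$; interpreting these as edges $\{a_i,b_i\}$ and $\{a_i,b_{i+1}\}$ one sees the graph is a single $2n$-cycle $a_1-b_1-\dots$ — wait, more carefully, it is $a_1-b_1-a_2-b_2-\cdots-a_n-b_n-a_1$ — hence connected (indeed it is a Hamiltonian cycle on all $2n$ vertices). So with $k=2$ the recovery system is solvable and we obtain all $\mathbf u'_{c_1^a}$ and $\mathbf u'_{c_2^b}$ exactly. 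Then for any unseen combination $(a,b)$, Lemma~\ref{lemma:reconstruct} / additivity gives the correct joint embedding $f(\mathbf x)=\mathbf f + \mathbf u'_{c_1^a}+\mathbf u'_{c_2^b}$ for every image $\mathbf x$ with those concept values (the unobserved factors $\mathcal C_3,\dots,\mathcal C_c$ average out, as in Lemma~\ref{lemma:additivity}), so a classifier that computes $\arg\max_a \langle \mathbf u'_{c_1^a}, f(\mathbf x)-\mathbf f\rangle$ for concept $1$ and the analogous rule for concept $2$ recovers both labels. This is where the hypothesis that the joint span has dimension $2n-1$ enters: it guarantees the $2n$ recovered vectors are ``as independent as possible'' (the only relation is the one zero-sum relation per factor, consistent across the two factors), so the distinct sums $\mathbf u'_{c_1^a}+\mathbf u'_{c_2^b}$ are all distinct and the nearest-vector / max-inner-product decoding is exact; I would make this precise by noting that the decision rule only needs $\langle \mathbf u'_{c_1^a},\mathbf u'_{c_1^{a'}}\rangle$ to be maximized at $a'=a$, which follows once we subtract off the concept-$2$ component cleanly, and full-rank-ness of the concept-$1$ Gram matrix (a consequence of the $2n-1$ span assumption) makes this a well-posed least-squares read-off.

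Finally I would count: the test set is $\mathcal C_{\text{test}}$, which by construction has size $(n-k)\cdot n$, matching the claim. The main obstacle I anticipate is the uniqueness-of-recovery step — specifically being careful that the linear system over $\mathbb R^d$-valued unknowns decouples coordinate-wise so that graph connectedness genuinely suffices, and pinning down exactly how the ``joint span $=2n-1$'' assumption is used (it is needed not for recovery of the $\mathbf u'$ but for the final classifier to separate all combinations, since if the vectors were more degenerate two different combinations could share a representation). A secondary subtlety is handling the unobserved factors $\mathcal C_3,\dots,\mathcal C_c$ correctly: the recovery of $\mathbf u'_{c_1^a},\mathbf u'_{c_2^b}$ from training data uses averages over $\mathcal D_{c_1^a,c_2^b}$, and I must invoke the balancedness assumption (each unobserved value equally frequent within each cell) so that the nuisance terms vanish exactly as in the proof of Lemma~\ref{lemma:additivity}; once that is in place the argument is essentially the graph-connectivity observation plus a rank count.
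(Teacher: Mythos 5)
Your proposal follows the same three-stage plan as the paper's proof---estimate the pairwise joint embeddings cell by cell, solve a linear system for the individual centred concept vectors, then read off both labels with a linear decoder---but your handling of the identifiability step is genuinely different and, on the key point, sharper. The paper writes out the explicit $2n\times 2n$ design matrix for the constraints $\mathbf u'_{c_1^i,c_2^j}=\mathbf u'_{c_1^i}+\mathbf u'_{c_2^j}$ and asserts it has linearly independent rows; in fact $(\mathbf 1_n,-\mathbf 1_n)$ lies in its kernel (every row selects exactly one coordinate from each block), so those $2n$ equations alone determine the factors only up to the shift $\mathbf u'_{c_1^i}\mapsto\mathbf u'_{c_1^i}+\mathbf t$, $\mathbf u'_{c_2^j}\mapsto\mathbf u'_{c_2^j}-\mathbf t$. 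Your bipartite-graph argument gets this right: connectedness of the observation graph (a Hamiltonian $2n$-cycle under the cyclic design of Eq.~\eqref{eq:training_test_sets}) gives rank $2n-1$, and the zero-sum normalization of Lemma~\ref{lemma:zero_sum_embeddings} kills the remaining degree of freedom. This buys you both correctness and generality---any connected observation design suffices, not just the cyclic one---at the cost of not exhibiting the explicit system the paper displays.

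Two gaps to close before this is complete. First, you treat the centred joint embeddings $\mathbf u'_{c_1^a,c_2^b}$ as directly observed, but by Definition~\ref{def:dfn_pairwise_joint_embedding} computing them requires the global mean $\mathbf f$, which is an average over \emph{all} combinations; the paper devotes Part~1 of its proof to showing that the balanced cyclic design makes the training-set mean coincide with $\mathbf f$, and your argument needs that step too (your balancedness remark addresses the nuisance factors $\mathcal C_3,\dots,\mathcal C_c$ inside each cell, which is a separate issue). Second, the rule $\arg\max_a\langle\mathbf u'_{c_1^a},f(\mathbf x)-\mathbf f\rangle$ is not valid in general, since the cross terms $\langle\mathbf u'_{c_1^a},\mathbf u'_{c_2^{b}}\rangle$ and the Gram matrix of the concept-$1$ vectors can reorder the scores; the least-squares (oblique-projection) read-off you mention at the end is the correct decoder and should replace the inner-product version. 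The paper's own Part~3 is comparably loose here (its claim that the direct-sum decomposition $U_1\oplus U_2$ is orthogonal is unwarranted), so on this step neither argument is fully rigorous, but the structure of your fix is the right one.
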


\begin{proof}
  The proof proceeds in three steps: (1) showing that joint factored embeddings are identifiable from training data, (2) showing that the system of linear equations has full rank with $2n$ equations and $2n$ unknowns, and (3) showing that optimal classifiers can be constructed via orthogonal projections.

  \textbf{Part 1: Identifying joint factored embeddings $\mathbf{u}_{c_1^i,c_2^j}$.}

  We assume $k=2$ for simplicity, but the same applies for higher $k$. First, note that we observe the following combinations:
  \begin{align}
      \mathcal{C}_{\text{train}} &= \{(i,i) : i \in [n]\} \cup \{(i,i+1) : i \in [n-1]\} \cup \{(n,1)\} \\
      &= \{(1,1), (2,2), ..., (n,n)\} \cup \{(1,2), (2,3), ..., (n-1,n)\} \cup \{(n,1)\}
  \end{align}
  with $|\mathcal{C}_{\text{train}}| = 2n$ total combinations. This dataset is restricted to the combinations in $\mathcal{C}_{\text{train}}$, but varies in other concepts. We denote this dataset as $\mathcal{D}_{\text{train}} := \{(c_1, c_2, \mathbf{x}) : (c_1, c_2) \in \mathcal{C}_{\text{train}}, \mathbf{x} \in \mathcal{X}\}$.
  
  We aim to show that the average embedding over the training set, $\bar{\mathbf{u}}_{\text{train}}$, equals the global mean embedding $\mathbf{f}$ (as defined in the proof of Lemma \ref{lemma:zero_sum_embeddings}). Let $\mathcal{D}_{i,j} \subset \mathcal{D}_{\text{train}}$ be the subset of training samples for the specific concept combination $(i,j)$. To see the importance of this, note that 
  \begin{align}
      \bu'_{c_1^i,c_2^j} &= \bu'_{c_1^i} + \bu'_{c_2^j}. \label{eq:additivity}
  \end{align}

By Definition \ref{def:dfn_pairwise_joint_embedding}, given some observations of concept values $c_1^i$ and $c_2^j$, the pairwise joint embedding $\bu'_{c_1^i,c_2^j}$ is the average of the embeddings of the training samples for the combination $(i,j)$ shifted by the global mean embedding $\mathbf{f}$. Consider the mean embedding over the training set 
\begin{align}
\bar{\mathbf{u}}_{\text{train}} := \frac{1}{|\mathcal{D}_{\text{train}}|} \sum_{\mathbf{x} \in \mathcal{D}_{\text{train}}} f(\mathbf{x}).
\end{align}
We now show that $\mathbf{f} = \bar{\mathbf{u}}_{\text{train}}$.
  
  Under the assumption of a balanced training set where each combination $(i,j) \in \mathcal{C}_{\text{train}}$ has the same number of samples, we can define the mean embedding for each combination as:
  \[
  \bar{\mathbf{f}}_{i,j} := \frac{1}{|\mathcal{D}_{i,j}|} \sum_{\mathbf{x} \in \mathcal{D}_{i,j}} f(\mathbf{x}).
  \]
  The overall training mean is then:
  \begin{align}
      \bar{\mathbf{u}}_{\text{train}} &:= \frac{1}{|\mathcal{D}_{\text{train}}|} \sum_{\mathbf{x} \in \mathcal{D}_{\text{train}}} f(\mathbf{x}) \\
      &= \frac{1}{2n}\left(\sum_{i=1}^n \bar{\mathbf{f}}_{i,i} + \sum_{i=1}^{n-1} \bar{\mathbf{f}}_{i,i+1} + \bar{\mathbf{f}}_{n,1} \right) \\
      &= \frac{1}{2n}\left(\sum_{i=1}^n (\mathbf{f} + \mathbf{u}'_{c_1^i} + \mathbf{u}'_{c_2^i}) + \sum_{i=1}^{n-1} (\mathbf{f} + \mathbf{u}'_{c_1^i} + \mathbf{u}'_{c_2^{i+1}}) + (\mathbf{f} + \mathbf{u}'_{c_1^n} + \mathbf{u}'_{c_2^1})\right) \\
      &= \frac{1}{2n}\left(2n\mathbf{f} + 2\sum_{i=1}^n \mathbf{u}'_{c_1^i} + 2\sum_{i=1}^n \mathbf{u}'_{c_2^i}\right) \\
      &= \frac{1}{2n}(2n\mathbf{f} + 2\cdot\mathbf{0} + 2\cdot\mathbf{0}) \quad \text{(by Lemma \ref{lemma:zero_sum_embeddings})} \\
      &= \mathbf{f}
  \end{align}
  
  Thus, we can identify the factored representations $\mathbf{u}_{c_1^i, c_2^j}$ for each concept value combination $i, j \in [n]$ from the training data since the average representation over the training data under our training dataset is the global mean embedding $\mathbf{f}$.  With this, we can compute $\mathbf{u}'_{c_1^i,c_2^j}$ for $2n$ combinations.

  \textbf{Part 2: Identifying the individual factored representations $\mathbf{u}_{c_1^i}$ and $\mathbf{u}_{c_2^i}$ for each concept value $i \in [n]$.}
  
  Consider a training set with exactly two combinations per concept value. By the linear factorization property, for any combination $(i,j)$ in our training set, we have: $\mathbf{u}'_{c_1^i,c_2^j} = \mathbf{u}'_{c_1^i} + \mathbf{u}'_{c_2^j}$, where $c_1^i$ denotes value $i$ for the first concept and $c_2^j$ denotes value $j$ for the second concept.
  
  Let $\mathbf{U}_1, \mathbf{U}_2 \in \mathbb{R}^{d \times n}$ be matrices whose columns are the unknown factored representations $\mathbf{u}'_{c_1^i}$ and $\mathbf{u}'_{c_2^i}$ respectively for $i \in [n]$. Let $\mathbf{V} \in \mathbb{R}^{d \times 2n}$ be the matrix of observed pairwise joint embeddings $\mathbf{u}'_{c_1^i,c_2^j}$ for the $2n$ training combinations. The system of equations can be written as:
  
  \begin{equation}
  \underbrace{\begin{bmatrix}
  \mathbf{u}'_{c_1^1,c_2^1} \\
  \mathbf{u}'_{c_1^2,c_2^2} \\
  \vdots \\
  \mathbf{u}'_{c_1^n,c_2^n} \\
  \midrule
  \mathbf{u}'_{c_1^1,c_2^2} \\
  \mathbf{u}'_{c_1^2,c_2^3} \\
  \vdots \\
  \mathbf{u}'_{c_1^{n-1},c_2^n} \\
  \mathbf{u}'_{c_1^n,c_2^1}
  \end{bmatrix}}_{\mathbf{V}} = 
  \begin{bmatrix}
  \begin{matrix}
  1 & 0 & \cdots & 0 \\
  0 & 1 & \cdots & 0 \\
  \vdots & \vdots & \ddots & \vdots \\
  0 & 0 & \cdots & 1
  \end{matrix} & \vrule &
  \begin{matrix}
  1 & 0 & \cdots & 0 \\
  0 & 1 & \cdots & 0 \\
  \vdots & \vdots & \ddots & \vdots \\
  0 & 0 & \cdots & 1
  \end{matrix} \\
  \midrule
  \begin{matrix}
  1 & 0 & \cdots & 0 \\
  0 & 1 & \cdots & 0 \\
  \vdots & \vdots & \ddots & \vdots \\
  0 & 0 & \cdots & 1 \\
  1 & 0 & \cdots & 0
  \end{matrix} & \vrule &
  \begin{matrix}
  0 & 1 & \cdots & 0 & 0 \\
  0 & 0 & \ddots & 0 & 0 \\
  \vdots & \vdots & \ddots & \vdots & \vdots \\
  0 & 0 & \cdots & 1 & 0 \\
  0 & 0 & \cdots & 0 & 1
  \end{matrix}
  \end{bmatrix}
  \underbrace{\begin{bmatrix}
  \mathbf{u}'_{c_1^1} \\
  \mathbf{u}'_{c_1^2} \\
  \vdots \\
  \mathbf{u}'_{c_1^n} \\
  \midrule
  \mathbf{u}'_{c_2^1} \\
  \mathbf{u}'_{c_2^2} \\
  \vdots \\
  \mathbf{u}'_{c_2^n}
  \end{bmatrix}}_{\begin{bmatrix}\mathbf{U}_1\\\midrule\mathbf{U}_2\end{bmatrix}}
  \end{equation}
  
  We note that this system is full rank, as the design matrix has linearly independent rows. The first block of rows corresponds to the diagonal combinations $(i,i)$, while the second block corresponds to cyclic combinations $(i,i+1)$ (with wraparound from $n$ to $1$). These form distinct patterns that ensure linear independence.
  
  Given this full rank system with $2n$ equations and $2n$ unknowns (the factored representations $\mathbf{u}_{c_1^i}$ and $\mathbf{u}'_{c_2^i}$ for each concept value), we can uniquely solve for the factored representations. For $k > 2$ combinations per concept value, we get more equations while maintaining the same number of unknowns, making the system overdetermined and the solution more robust.
  
  Once we recover these factored representations, we can compute $\mathbf{u}'_{c_1^i,c_2^j} = \mathbf{u}'_{c_1^i} + \mathbf{u}'_{c_2^j}$ for any combination $(i,j)$, including the $(n-2)n$ unseen ones.

  \textbf{Part 3: Optimality of classifiers.} To show that we can construct classifiers that provable generalize to novel combinations, we simply note that by assumption no concept representation is within the span of remaining representations. As such, given $U_1 := \text{span}(\{\bu'_{c_1^i}\}_{i=1}^{|\cC_1|})$, and $U_2 := \text{span}(\{\bu'_{c_2^i}\}_{i=1}^{|\cC_2|})$, such that $\text{dim}(U_1) = |\cC_1| - 1$ and $\text{dim}(U_2) = |\cC_2| - 1$ and $U_1 \cap U_2 = \{0\}$, any vector $\bw$ in their joint span can be uniquely decomposed as $\bw = \bu_1 + \bu_2$ where $\bu_1 \in U_1$, $\bu_2 \in U_2$ and $\bu_1 \perp \bu_2$. This allows us to construct projection matrices $P_{U_1}$ and $P_{U_2}$ onto these orthogonal subspaces, which can then be used to build optimal classifiers by projecting input features onto the respective concept subspaces.

  \end{proof}

\subsection{Algorithmic recovery of factored representations} \label{app:algorithmic_recovery}

We provide a constructive algorithm for recovering factored concept representations from limited available training combinations  in Algorithm \ref{alg:recover_factored}. 

\begin{algorithm}[H] \small
\caption{Recovering factored concept representations for $k=2$ concepts}
\label{alg:recover_factored}
\begin{algorithmic}[1]
\REQUIRE Training dataset $\mathcal{D}_{\text{train}}$ where each individual concept appears in at least 2 different combinations ($k \geq 2$)
\REQUIRE Feature extractor $f: \mathcal{X} \rightarrow \mathbb{R}^d$
\ENSURE Factored concept representations $\{\mathbf{u}'_{c_1^i}\}_{i=1}^n$, $\{\mathbf{u}'_{c_2^i}\}_{i=1}^n$

\STATE Compute global mean embedding: $\mathbf{f}_d \leftarrow \frac{1}{|\mathcal{D}_{\text{train}}|} \sum_{\mathbf{x} \in \mathcal{D}_{\text{train}}} f(\mathbf{x})_d$ for each dimension $d$

\FOR{$d=1$ to $d$}
    \STATE Initialize design matrix $\mathbf{A} \in \mathbb{R}^{2n \times 2n}$ based on observed combinations
    \STATE Initialize $\mathbf{v} \in \mathbb{R}^{2n}$ to store joint  embeddings for dimension $d$
    \STATE $row \leftarrow 1$
  
    \FOR{each combination $(i,j)$ in training set}
        \STATE $\mathbf{u}'_{c_1^i,c_2^j} \leftarrow \frac{1}{|\{\mathbf{x}: c(\mathbf{x})_1=i, c(\mathbf{x})_2=j\}|} \sum_{\mathbf{x}: c(\mathbf{x})_1=i, c(\mathbf{x})_2=j} f(\mathbf{x})_d - \mathbf{f}_d$
        \STATE Store $\mathbf{u}'_{c_1^i,c_2^j}$ in position $row$ of $\mathbf{v}$
        \STATE Update row $row$ of $\mathbf{A}$ with indicators for concepts $i$ and $j$
        \STATE $row \leftarrow row + 1$
    \ENDFOR

    \STATE Solve system $\mathbf{A}\begin{bmatrix}\mathbf{u}'_1\\\mathbf{u}'_2\end{bmatrix} = \mathbf{v}$ for dimension $d$
    \STATE Store solutions in $\{{u}'_{c_1^i}\}_{i=1}^n$, $\{{u}'_{c_2^i}\}_{i=1}^n$ at dimension $d$
\ENDFOR

\STATE \textbf{return} $\{\mathbf{u}'_{c_1^i}\}_{i=1}^n$, $\{\mathbf{u}'_{c_2^i}\}_{i=1}^n$    \end{algorithmic}
\end{algorithm}

\section{Additional experimental results} \label{app:additional_results}

This section presents supplementary experimental findings.

\subsection{From-scratch model performance} \label{app:from_scratch_results}
Figure \ref{fig:concept_scaling} summarizes how out-of-distribution accuracy varies with the number of concept classes and the number of training combinations per class across four datasets. In all cases, increasing concept diversity (number of classes) is associated with higher compositional generalization performance, even when the number of training combinations per class is held fixed. 

\begin{figure*}[h!]
    \centering   
    \includegraphics[width=0.8\textwidth]{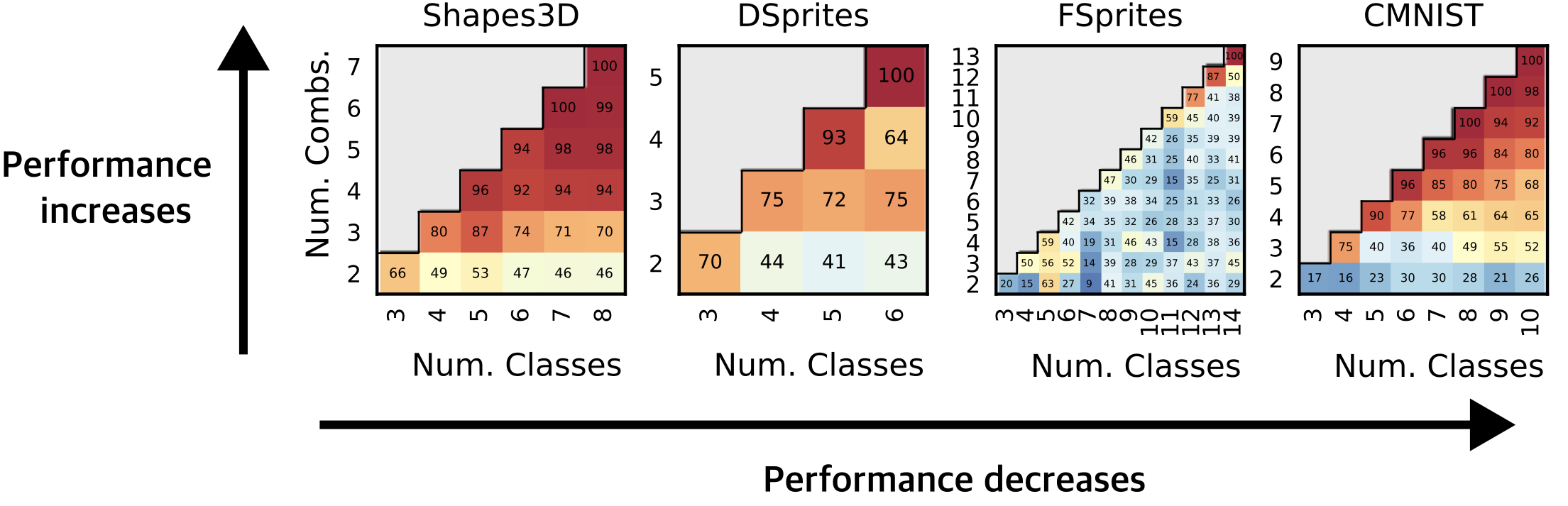} 
    \caption{\textbf{Performance scaling with concept diversity.} OOD accuracies across four datasets: Shapes3D, dSprites, FSprites, and Colored-MNIST. Each heatmap shows performance for different combinations of concept values ($n$) and seen combinations ($k$) per concept value. Increasing concept diversity (higher $n$) consistently improves generalization performance across all datasets, even when the number of training combinations per concept remains fixed.}
    \label{fig:concept_scaling}   
\end{figure*}

\subsection{Pre-trained model probing results} \label{app:probing_pretrained}

\begin{figure}[H]
    \centering   
    \includegraphics{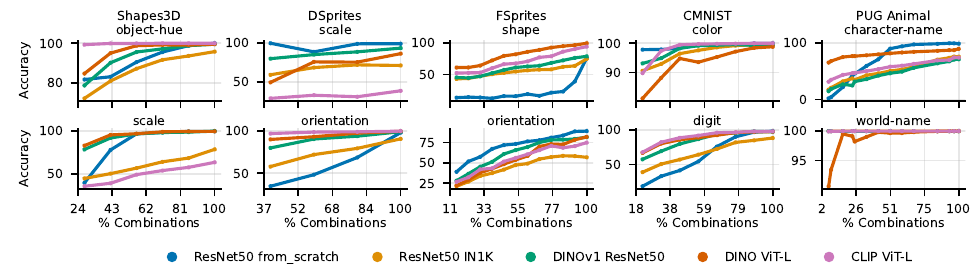} 
    \caption{\textbf{Compositional generalization in pre-trained models.} Heatmaps show out-of-distribution accuracy for different combinations of $n$ (concept values) and $k$ (training combinations) across datasets. Darker colors indicate higher accuracy. Pre-trained models exhibit improved generalization with increased concept diversity, mirroring the pattern observed in from-scratch training.}
    \label{fig:concept_diversity_scaling}   
\end{figure}

To systematically probe compositional generalization in pre-trained vision models, we evaluated a range of architectures, including ResNet50 (from scratch and ImageNet pre-trained), DINOv1, DINO ViT-L, and CLIP ViT-L across several datasets and concept axes, as shown in Figure~\ref{fig:concept_diversity_scaling}.

\subsection{MPI3D dataset results} \label{app:mpi3d_results}

To validate our findings on real-world datasets, we conduct experiments on the MPI3D dataset \citep{gondal2019transfer}, which contains photographs of 3D scenes with systematic concept variations.

\begin{figure}[h]
    \centering
    \includegraphics{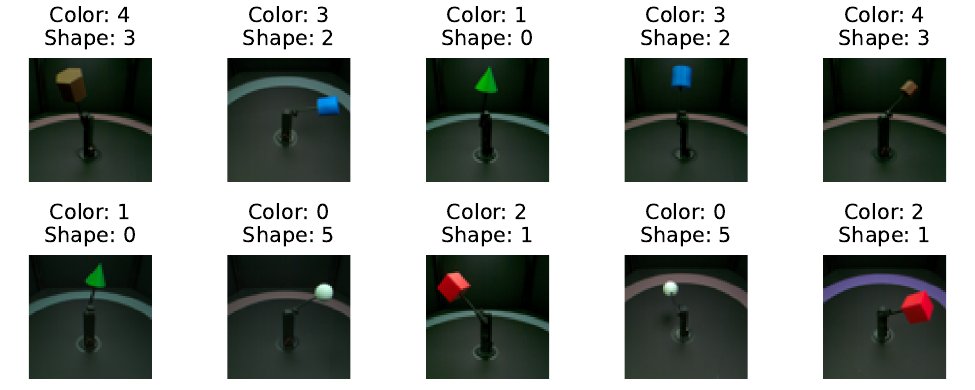}
    \caption{\small Sample images from the MPI3D dataset \cite{gondal2019transfer}. The dataset contains real-world images of objects with varying properties like color, shape, size and camera viewpoint. Examples from the testing set of $n=6, k=5$ are shown.}
    \label{fig:mpi3d_samples}
\end{figure}

\begin{figure}[h]
    \centering
    \includegraphics{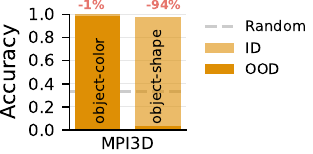}
    \caption{\small Accuracy comparison for $n=3, k=2$ using ResNet-50. As shown in the main text, compositional generalization is difficult: the model struggles to generalize to the object-shape concept.}
    \label{fig:accuracy_comparison_n3_k1} 
\end{figure}

\begin{figure}[H]
    \centering
    \includegraphics{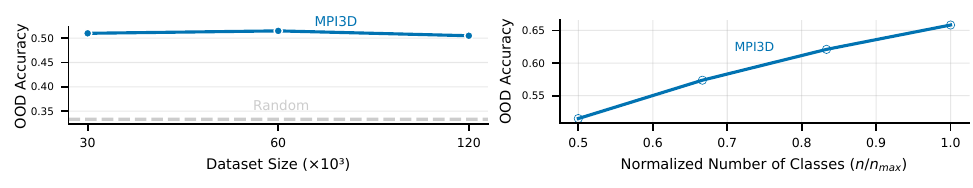}
    \includegraphics{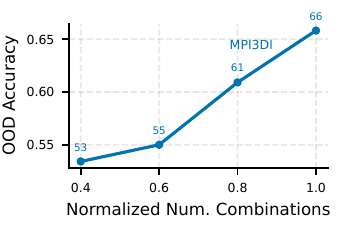}
    \caption{Compositional generalization only improves with data diversity, not data quantity. Top left: Under
    few training combinations (n = 3, k = 2), compositional generalization does not benefit from more ID data.
    The remaining plots show compositional generalization improving with more diverse training combinations:
    when the number of classes increases (top right), and when the number of training combinations increases
    (bottom left)}
    \label{fig:rebuttal_figure_4_1}
\end{figure}

These results provide strong evidence that compositional generalization benefits specifically from \emph{diversity} in concept combinations rather than mere quantity of training data.

\begin{figure}[H]
    \centering
    \includegraphics[width=1.0\textwidth]{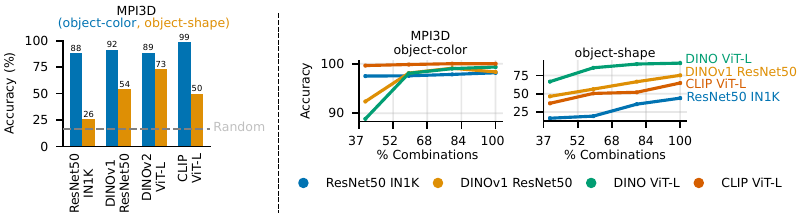}
    \caption{\small Evaluating pre-trained vision models on MPI3D. Left: Accuracy comparison for classifiers constructed under linear factorization. All models show near-perfect accuracy on the color concept, while shape concept performance is worse. Right: Probing results using linear and non-linear probes.}
    \label{fig:rebuttal_linearities_models}
\end{figure}

\subsection{Comparison of different probe configurations} \label{app:probe-eval}

We present a detailed comparison of probe results across different model architectures and probe types. Table~\ref{tab:avg-acc} reports the accuracy of both linear and non-linear (two-layer MLP) probes on the FSprites dataset for several pre-trained models. Notably, non-linear probes generally yield higher accuracy than linear probes, especially for models like DINO ResNet-50 and DINO ViT-Large, indicating that some compositional information is not linearly accessible in the representations. However, for ResNet-50 and CLIP ViT-Large, the difference between linear and non-linear probe performance is smaller, suggesting that their representations are more linearly separable for the evaluated concepts.

\begin{table}[ht]
  \centering
  \caption{\textbf{Linear and non-linear probing results.} Comparison between linear probes and two-layer MLPs \colorbox{lightgray}{\texttt{[512,512]}} as the observed percentage of combinations on FSprites dataset. Results show the accuracy in the form of linear / non-linear probing for different pre-trained models.}
  \label{tab:avg-acc}
  \begin{tabular}{lcccc}
    \toprule
    \textbf{Model} & \textbf{25\%} & \textbf{50\%} & \textbf{75\%} & \textbf{93\%} \\
    \midrule
    ResNet-50 ImageNet  & 0.59 / 0.55 & 0.67 / 0.65 & 0.75 / 0.75 & 0.79 / 0.82 \\
    DINO ResNet-50      & 0.60 / 0.67 & 0.71 / 0.80 & 0.76 / 0.88 & 0.80 / 0.92 \\
    DINO ViT-Large      & 0.68 / 0.70 & 0.78 / 0.83 & 0.84 / 0.91 & 0.86 / 0.95 \\
    CLIP ViT-Large      & 0.61 / 0.64 & 0.70 / 0.74 & 0.75 / 0.79 & 0.76 / 0.84 \\
    \bottomrule
  \end{tabular}
\end{table}

\subsection{Architecture comparisons} \label{app:vit_comparison}

We provide detailed comparisons between different neural architectures to validate our choice of ResNet-50 as the primary baseline.

A comprehensive hyper-parameter sweep was conducted for the vision transformer (ViT), varying patch size ($\in\{8,16\}$), depth ($\in\{4,6,8\}$), width ($\in\{384,512\}$), number of heads ($\in\{8,12\}$), MLP width ($\in\{384,512\}$), and learning rate ($\in\{3\!\times\!10^{-4},\,1\!\times\!10^{-4},\,3\!\times\!10^{-5}\}$). Across all configurations, ViT does not outperform a scratch-trained ResNet-50 in OOD generalisation. Both models achieve comparable in-distribution accuracy (99.7\%), but the ResNet-50 baseline consistently yields higher OOD performance across datasets and diversity regimes. Table \ref{tab:scratch} summarises these results.

\begin{table}[H]
    \centering
    \caption{Accuracy of ResNet50 and ViT models trained from scratch.  
             }
    \label{tab:scratch}
    \sisetup{table-format=3.1}
    \begin{tabular}{l lccS[table-format=3.0]S[table-format=2.1]}
      \toprule
      Dataset & Model & \% Combinations & $k$/$n$ & {Training samples ($\times 10^3$ )} & {OOD Acc.} \\
      \midrule
      \multirow{4}{*}{CMNIST}
        & ResNet-50 & 80 & 8 / 10 &  60 & 95.1 \\
        & ViT       & 80 & 8 / 10 &  60 & 94.5 \\
        & ResNet-50 & 40 & 4 / 10 &  60 & 66.0 \\
        & ViT       & 40 & 4 / 10 &  60 & 71.0 \\[2pt]
      \midrule[0.9pt]
      \multirow{3}{*}{FunnySprites}
        & ResNet-50     & 92 & 13 / 14 &  60 & 80.1 \\
        & ViT           & 92 & 13 / 14 &  60 & 66.0 \\
        & ViT$^\dagger$ & 92 & 13 / 14 & 120 & 57.3 \\
      \bottomrule
    \end{tabular}
  \end{table}

\vspace{0.3em}

\section{Dataset details and examples} \label{app:datasets}

This section provides comprehensive information about all datasets used in our experiments, including detailed descriptions and visual examples.

\begin{table}[H]
    \centering
    \caption{\textbf{Overview of experimental datasets.} Each dataset provides controlled variations along two primary concept dimensions, enabling systematic study of compositional generalization.}
    \label{tab:datasets}
    \begin{tabular}{lll}
        \toprule
        \textbf{Dataset} & \textbf{Primary Concepts ($\mathcal{C}_1, \mathcal{C}_2$)} & \textbf{Concept Values} \\ 
        \midrule
        PUG & Animal type, Background type & 60 each \\ 
        Shapes3D & Scale, Object hue & 8 each \\
        dSprites & Scale, Orientation & 6 each \\
        FunnySprites & Shape, Color & 14 each \\
        Colored-MNIST & Digit, Color & 10 each \\
        MPI3D & Object shape, Object color & Variable \\
        \bottomrule 
    \end{tabular} 
\end{table}

\begin{figure}[H]
    \centering   
    \includegraphics[width=0.5\textwidth]{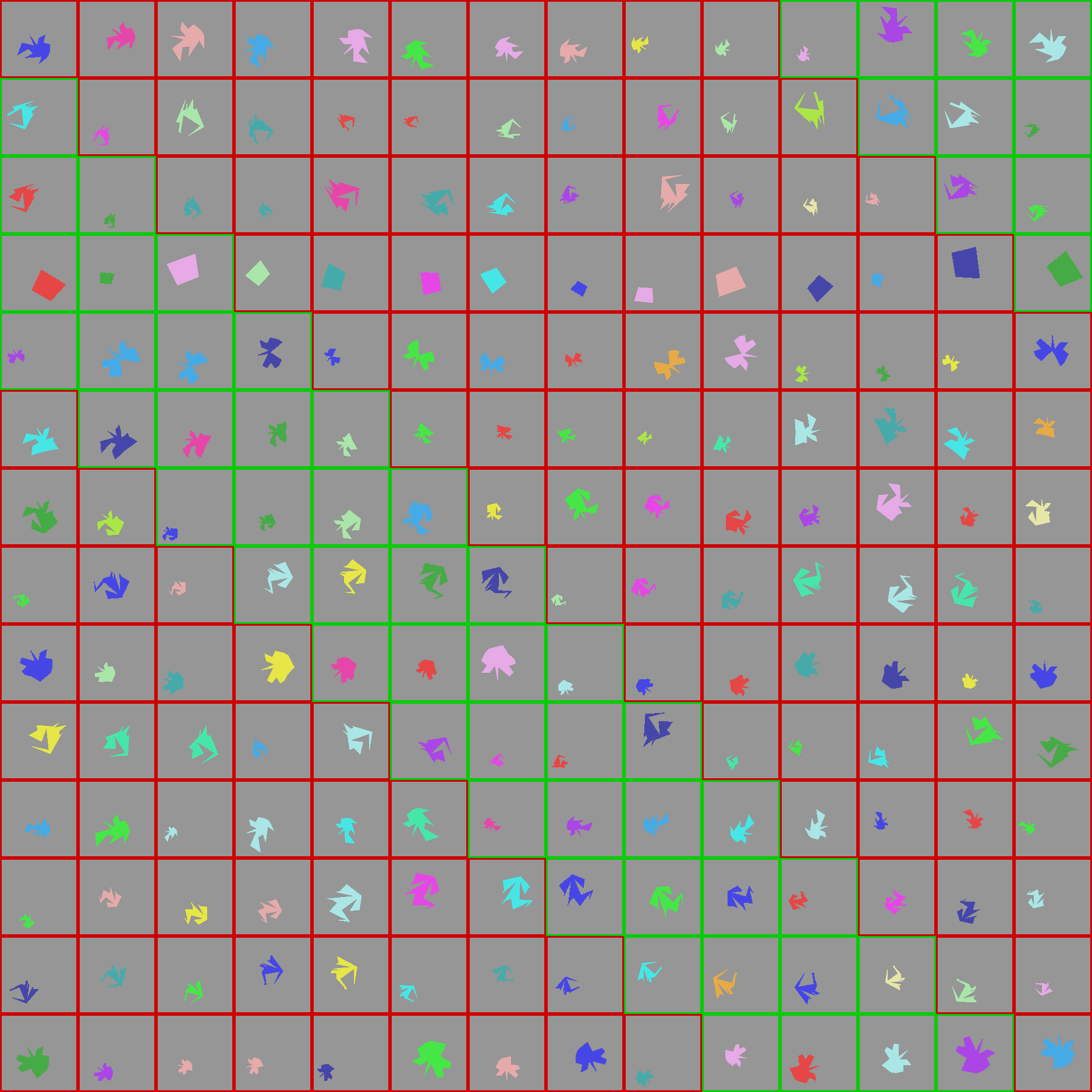} 
    \caption{\textbf{FunnySprites dataset examples.} Shape and orientation variations for $n=14$ concept values with $k=2$ training combinations. Each sprite is generated by connecting traced points to form unique geometric shapes, providing a challenging test for compositional generalization.}
    \label{fig:funny_sprites_examples}   
\end{figure}

\begin{figure}[H]
    \centering   
    \includegraphics[width=0.5\textwidth]{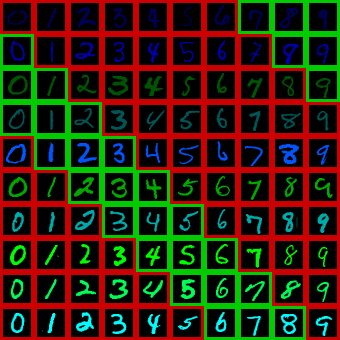} 
    \caption{\textbf{Colored-MNIST examples.} Digit and color combinations for $n=10$ values with $k=3$ training combinations. This dataset combines the MNIST digits with color variations to test compositional understanding of shape and color attributes.}
    \label{fig:colored_mnist_examples}   
\end{figure}
\vspace{-1em}

We introduce the Funny Sprites dataset, an OOD dataset designed to test models' ability to generalize to previously unseen shape combinations. The dataset consists of sprites traced from 5-15 points on a 128x128 pixel grid, creating a diverse set of abstract geometric shapes.

\end{document}